\newtheorem{theorem}{Theorem}
\newtheorem{assumption}[theorem]{Assumption} 
\newtheorem{lemma}[theorem]{Lemma}
\newtheorem{definition}[theorem]{Definition}
\newcommand{\poly}{\textrm{poly}}
\newcommand{\argmin}{\textrm{argmin}}
\newcommand{\reals}{\mathbb{R}}
\newcommand{\regret}{\textrm{Regret}}
\newcommand{\mI}{\mathcal I}
\newcommand{\citep}{\cite}
\begin{document}

\title{The Nonstochastic Control Problem}

\author{
    Elad Hazan$^{1,2}$ \qquad Sham M. Kakade$^{1,3,4}$ \qquad Karan Singh$^{1,2}$\\
    \quad \\
    $^1$ Google AI Princeton \\
    $^2$ Department of Computer Science, Princeton University \\
    $^3$ Allen School of Computer Science and Engineering, University of Washington\\
    $^4$ Department of Statistics, University of Washington\\ \texttt{\{ehazan, karans\}@princeton.edu, sham@cs.washington.edu}\\
}
\maketitle

\newcommand{\elad}[1]{$\ll$\textsf{\color{red} Elad : #1}$\gg$}

\begin{abstract}%
We consider the problem of controlling an unknown linear dynamical system in the presence of (nonstochastic) adversarial perturbations and adversarial convex loss functions. In contrast to classical control, here it is impossible to precompute the optimal controller as it depends on the yet unknown perturbations and costs. Instead, we measure regret against an optimal linear policy in hindsight, and give the first efficient algorithm that guarantees a sublinear regret bound, scaling as $O(T^{2/3})$, in this setting.
\end{abstract}

\section{Introduction}
 Classical control theory assumes that nature evolves according to well-specified dynamics that is perturbed by i.i.d. noise. While this approximation has proven very useful for controlling some real world systems, it does not allow for construction of truly robust controllers. The focus of this paper is the construction of truly robust controllers {\bf even when the underlying system is unknown and the perturbations are adversarially chosen}.  For this purpose we describe the nonstochastic control problem and study efficient algorithms to solve it for linear dynamical systems. 

Specifically, we consider the case in which the underlying system is linear, but has potentially adversarial perturbations (that can model deviations from linearity), i.e.
\begin{equation} \label{eqn:shalom}
x_{t+1} = A x_t + B u_t + w_t ,
\end{equation}
where $x_t$ is the (observed) dynamical state, $u_t$ is a learner-chosen control and $w_t$ is an adversarial disturbance. The goal of the controller is to minimize a sum of sequentially revealed adversarial cost functions $c_t(x_t,u_t)$ over the state-control pairs that it visits.

The adversarial nature of $w_t$ prohibits an a priori computation of an optimal policy that is the hallmark of classical optimal control. Instead, we consider algorithms for online control that iteratively produce a control $u_t$ based on previous observations. The goal in this game-theoretic setting is to minimize policy regret, or the regret compared to the best controller from a class $\Pi$, chosen with complete foreknowledge of the system dynamics, the cost sequence, and all the disturbances: 
$$ \regret =  \sum_{t=1}^T c_t (x_t ,u_t) - \min_{\pi \in \Pi} \sum_{t=1}^T c_t(x^\pi_t , u^\pi_t) . $$

Notice that the cost of the benchmark is measured on the counterfactual state-action sequence $(x^\pi_t, u^\pi_t)$ that the benchmark policy in consideration visits, as opposed to the state-sequence visited by the the learner. This implies instance-wise near-optimality for every perturbation sequence. 

We give a formal definition of the nonstochastic control problem henceforth. Informally, nonstochastic control for linear dynamical systems (LDS) can be stated as follows: 

\medskip
\noindent\fbox{%
    \parbox{\textwidth}{%
        {\bf Nonstochastic Control for LDS:} Without knowledge of the underlying system $A,B$, or the perturbations $w_t$, iteratively generate controls $u_t$ to minimize regret, over sequentially revealed adversarial convex costs $c_t(x_t,u_t)$, against the class of all linear policies.
    }%
}
\smallskip

Our main result is an efficient algorithm for the nonstochastic control problem which attains the following guarantee:
\begin{theorem}[Informal Statement]
For an unknown linear dynamical system where the perturbations $w_t$ (and convex costs $c_t$) are bounded and chosen by an  adversary, there exists an efficient algorithm that generates an adaptive sequence of controls $\{u_t\}$ for which 
$$ \regret  = O(\poly(\texttt{natural-parameters})T^{2/3}) .$$
\end{theorem}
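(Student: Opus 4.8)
The plan is an \emph{explore-then-control} decomposition: spend an initial window of $T_0$ rounds identifying the unknown dynamics, and then reduce the remaining regret-minimization problem to online convex optimization over a convenient policy parametrization. Throughout I take as given (folded into the \texttt{natural-parameters}) that the disturbances and the convex costs are bounded and that the system is stabilizable via a known controller, so that the state stays bounded during the exploration window and the comparator linear policies are themselves stabilizing.

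First, during the $T_0$ exploration rounds I would inject i.i.d.\ random controls (say Gaussian or Rademacher $u_t$) and record the resulting states. Since $x_{t+1} = \sum_{i\ge 0} A^i B\, u_{t-i} + (\text{disturbance and initial-state terms})$, the map from past controls to the current state is governed by the Markov operator $G = (B, AB, A^2B, \dots)$. Because the injected controls are independent of the adversarial $w_t$, a least-squares regression of states onto the recent \emph{control inputs} (which sidesteps the endogeneity one would incur by regressing on $x_t$) yields an estimate of the first $H = O(\log T)$ Markov parameters with error $\tilde O(1/\sqrt{T_0})$; the geometric decay of $A^i$ coming from stability makes the truncation at $H$ negligible. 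This estimated operator simultaneously lets me reconstruct the disturbance signal driving the system (the part of each observed state not explained by past controls), up to the same estimation error.

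Second, in the control phase I would deploy a \emph{disturbance-action policy} $u_t = \sum_{i=1}^{H} M_t^{[i]} \hat w_{t-i}$, a linear function of the recovered disturbances, with parameters $M_t$ updated online. Two facts make this effective: (i) substituting this parametrization into the truncated dynamics renders the per-step cost a convex function of a short window $M_{t-H:t}$ of recent parameter choices; and (ii) any linear state-feedback comparator $u=-Kx$ induces a geometrically decaying response to past disturbances and is therefore approximated to error $\eps$ by such a policy with $H=O(\log(1/\eps))$, so the best disturbance-action policy is essentially as good as the best linear policy. I would then invoke the online-convex-optimization-with-memory framework, running projected gradient descent on the counterfactual surrogate loss whose values and gradients are computed through the estimated operator, to obtain $O(\sqrt{T})$ regret against the best fixed $M$ relative to the surrogate.

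The main obstacle, and the origin of the $T^{2/3}$ rate, is controlling how the identification error contaminates the control phase. Both error channels — the bias in the surrogate loss and its gradients from using the estimated rather than the true operator, and the error in the recovered disturbances $\hat w_t$ — introduce a per-step penalty of order $\tilde O(1/\sqrt{T_0})$, which accumulates to $\tilde O(T/\sqrt{T_0})$ over the control phase. The delicate part is tracking this bias through the $H$-step memory structure so that the errors do not compound multiplicatively across the window, which the geometric stability bounds are again used to prevent. Finally I would collect the four contributions — exploration cost $O(T_0)$, identification-induced bias $\tilde O(T/\sqrt{T_0})$, OCO-with-memory regret $O(\sqrt{T})$, and negligible truncation/approximation error — and balance the first two by setting $T_0 = T^{2/3}$, yielding $\regret = O(\poly(\texttt{natural-parameters})\, T^{2/3})$.
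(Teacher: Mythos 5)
Your overall architecture --- stabilized random-input exploration, identification of the Markov parameters with error $\tilde{O}(1/\sqrt{T_0})$, a disturbance-action controller learned via OCO-with-memory, and balancing the exploration cost $O(T_0)$ against the accumulated bias $\tilde{O}(T/\sqrt{T_0})$ to set $T_0 = T^{2/3}$ --- is exactly the paper's explore-then-commit scheme, and your error budget matches theirs. But there is one concrete gap at the interface between identification and control. Your identification step produces only the closed-loop Markov parameters $(A-B\mathbb{K})^j B$, and from these alone the ``part of each observed state not explained by past controls'' is \emph{not} $w_t$: it is the accumulated disturbance response $\sum_{i\geq 0}(A-B\mathbb{K})^i w_{t-i}$, i.e.\ nature's state. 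Extracting the per-step $w_t$ that your policy $u_t=\sum_i M_t^{[i]}\hat{w}_{t-i}$ consumes requires differencing that response against $A-B\mathbb{K}$ itself --- knowledge of the system matrices, not just of the Markov operator. The paper closes precisely this hole with an assumption and a step you never invoke: it assumes $(A-B\mathbb{K},B)$ is $(k,\kappa)$-strongly controllable (Assumption~\ref{a:a3}), stacks the estimated moments into $C_0=(N_0,\dots,N_{k-1})$ and $C_1=(N_1,\dots,N_k)$, recovers $\hat{A'} = C_1 C_0^\top (C_0C_0^\top)^{-1}$ via a least-squares inversion whose conditioning is certified by controllability (Theorem~\ref{thm:rec} together with the perturbation bound of Lemma~\ref{l:axb}), and only then defines $\hat{w}_t = x_{t+1}-\hat{A}x_t-\hat{B}u_t$. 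Without controllability (or a substitute, such as reformulating the policy class to act directly on an estimate of nature's state rather than on individual disturbances), the map from Markov parameters to $(A,B)$ is not even well-posed, so your disturbance-recovery step as written fails.

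A secondary remark on the accounting: you propagate the identification error as a bias in the surrogate losses and gradients fed to the OCO procedure, which is workable but delicate. The paper sidesteps this entirely via its Simulation Lemma (Lemma~\ref{l:sim}): because $\hat{w}_t$ is defined as the residual of the \emph{estimated} system, the realized trajectory is exactly a trajectory of the fictitious system $(\hat{A},\hat{B})$ driven by $\{\hat{w}\}$, so the known-system regret bound of \cite{agarwal2019online} applies verbatim with zero learner-side bias; the only approximation terms live on the comparator side (Lemma~\ref{l:lips}, comparing $J(K|\hat{A},\hat{B},\{\hat{w}\})$ with $J(K|A,B,\{w\})$), supported by an inductive boundedness argument (Lemma~\ref{l:ugly}) guaranteeing the errors $\|w_t-\hat{w}_t\|$ do not compound. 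Your $T^{2/3}$ balance is the right one, but the proof needs the controllability-based recovery of $(A,B)$ --- or an explicit nature's-state reformulation --- to go through.
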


\subsection{Technical Outline}

Our starting point is the recent work of \cite{agarwal2019online}, where the authors proposed a novel class of policies choosing actions as a linear combination of past perturbations, $u_t = \sum_{i=1}^k M_i w_{t-i}$. They demonstrate that learning the coefficients $M_i$, via online convex optimization, allows their controller to compete with the class of all linear state-feedback policies. This latter class is important, since it is known to be optimal for the standard setting of normal i.i.d noise and quadratic loss functions, also known as the Linear Quadratic Regulator (LQR), and associated robust control settings (see \cite{bacsar2008h} for examples).

The caveat in \cite{agarwal2019online} is that the system matrices $(A,B)$ need to be known. In the case of a known system, the disturbances can be simply computed via observations of the state, ie. $w_t = x_{t+1} - A x_t - B u_t$. However, if the system is unknown, it is not clear how to generalize their approach. Fundamentally, the important component that is difficulty in identifying the system, or the matrices $A,B$, from the observations. This is non-trivial since the noise is assumed to be adversarial, and was posed as a question in \cite{tu2019sample}. 

In this paper we show how to overcome this difficulty and obtain sublinear regret for controlling an unknown system in the presence of adversarial noise and adversarial loss functions. The regret notion we adopt is policy regret against linear policies, exactly as in \cite{agarwal2019online}. An important component that we use is {\it adversarial sys-id}: an efficient method for uncovering the underlying system even in the presence of adversarial perturbations. This method is {\bf not} based on naive least squares method of regressing $(x_t,u_t)$ on $x_{t+1}$. In particular, without independent, zero-mean $w_t$'s, the latter approach can produce inconsistent estimates of the system matrices.


\subsection{Related Work}
\paragraph{Robust Control:} The classical control literature deals with  adversarial perturbations in the dynamics in a framework known as  $H_\infty$ control, see e.g. \cite{z2,z1}. In this setting, the controller solves for the best linear controller assuming worst case noise to come, i.e.,
$$ \min_{K_1} \max_{w_{1} } \min_{K_2} ... \min_{K_{T}} \max_{w_{T} } \sum_t c_t(x_t,u_t) . $$
This approach is overly pessimistic as it optimizes over the worst-case noise. In contrast, the metric in nonstochastic control (NSC) is regret, which adapts to the per-instance perturbations.  

\paragraph{Learning to control stochastic LDS:} 
There has been a resurgence of literature on control of linear dynamical systems in the recent machine learning venues. The case of known systems was extensively studied in the control literature, see the survey \cite{z2}. Sample complexity and regret bounds for control (under Gaussian noise) were obtained in \cite{abbasi2011regret,dean2018regret,a2,mania2019certainty,cohen2019learning}. 
The works of \cite{abbasi2014tracking}, \cite{cohen2018online} and \cite{agarwal2019logarithmic} allow for control in LDS with adversarial loss functions. Provable control in the Gaussian noise setting via the policy gradient method was  studied in \cite{fazel2018global}. These works operate in the absence of perturbations or assume that the same are i.i.d., as opposed to our adversarial. 

\paragraph{Control with adversarial perturbations:}
The most relevant reformulation of the control problem that enables our result is the recent work of \cite{agarwal2019online}, who use online learning techniques and convex relaxation to obtain provable bounds for controlling LDS with adversarial perturbations. However, the result and the algorithm make extensive use of the availability of the system matrices. \cite{tu2019sample} ask if the latter result can be extended to unknown systems, a question that we answer in the affirmative.  

\paragraph{System identification.}
For the stochastic setting, several works \citep{faradonbeh2018finite, simchowitz2018learning, sarkar2019near} propose to use the least-squares procedure for parameter identification. In the adversarial setting, least-squares can lead to inconsistent estimates. For the partially observed stochastic setting, \cite{oymak2019non, sarkar2019finite, simchowitz2018learning} give results guaranteeing parameter recovery using Gaussian inputs. Of these, the results in \cite{simchowitz2019learning} also apply to the adversarial setting. We offer a simpler analysis for parameter identification, and develop rigorous perturbation bounds for the control algorithm necessary to make guarantees on the quality of the control solution. Other relevant work from the machine learning literature includes spectral filtering techniques for learning and open-loop control of partially observable systems \citep{hazan2017learning, arora2018towards, hazan2018spectral}. 

\section{Problem Definition}\label{s:setup}
\subsection{Nontochastic Control: The General Case} 
Many dynamical systems admit the following dynamical description. These may be seen as discrete-time analogues of controlled diffusion processes \citep{krylov2008controlled}.
$$ x_{t+1} =  f(x_t,u_t) + w_t$$ 
Here $f(x,u)$ is a transition function, and $w_t$ are perturbations or deviations from nominal dynamics. We consider an online control problem during the course of which a controller must iteratively choose a control input $u_t \in \reals^n$, and suffers a loss $c_t(x_t,u_t)$, where $x_t \in \reals^m$ is the state of the system. The controller is thereafter presented some output in the form of a resultant information set $\mI_t$. For example, the controller may observe the state $x_t$, and cost function $c_t$, but the transition function $f$ might be unknown. A policy $\pi=(\pi_1,\dots \pi_t : \pi_t:: \mI_t \to \reals^m)$ is a mapping from observed information to control. We denote a set of policies by $\Pi$. We measure the performance of a control algorithm through the metric of policy regret: the difference between the aggregate cost of the controller and that of the best policy in hindsight from a certain class.  
\begin{definition}[Nonstochastic Control]
A Nonstochastic Control problem instance is given by tuple $( f:\reals^m\times \reals^n \to \reals^m ,\{w_t\in \reals^m \}, \{c_t: \reals^m\times \reals^n \to \reals\}, \Pi ) $, where for any sequence of controls $u_1,...,u_T$, the states are produced as $x_{t+1} =  f(x_t,u_t) + w_t$. The goal of the learner is to choose an adaptive sequence of controls to minimize regret against the policy class $\Pi$, defined as: 
$$ \regret =  \sum_{t=1}^T c_t (x_t ,u_t) - \min_{\pi \in \Pi} \sum_{t=1}^T c_t(x^\pi_t , u^\pi_t),$$
where $(x_t^\pi, u_t^\pi)$ is the state-control pair visited by the benchmark policy $\pi\in \Pi$ in consideration.
\end{definition}

We specialize the above definition to linear dynamical systems below.

\subsection{Nonstochastic Control for Linear Dynamical Systems}
We consider the setting of linear dynamical systems with time-invariant dynamics, i.e. 
$$ x_{t+1} = A x_t + B u_t + w_t,$$
where $x_t\in \reals^m$ and $u_t\in \reals^n$. The perturbation sequence $w_t$ may be adversarially chosen at the beginning of the interaction, and is unknown to the learner. Likewise, the system is augmented with time-varying convex cost functions $c_t(x,u)$. The total cost associated with a sequence of (random) controls, derived through an algorithm $\mathcal{A}$, is 
\[  J(\mathcal{A}) = \sum_{t=1}^T c_t(x_t,u_t).\]
With some abuse of notation, we will denote by $J(K)$ the cost associated with the execution of controls as a linear controller $K$ would suggest, ie. $u_t = -Kx_t$. The following conditions are assumed on the cost and the perturbations~\footnote{Without loss of generality we shall assume that $G,D,W, \kappa\geq 1$ holds, since these are upper bounds.}.

\begin{assumption}\label{a:a1}
The perturbation sequence is bounded, ie. $\|w_t\|\leq W$, and chosen at the start of the interaction, implying that this sequence $w_t$ does not depend on the choice of $u_t$.
\end{assumption}

\begin{assumption}\label{a:a2}
As long as $\|x_t\|,\|u_t\|\leq D$, the convex costs admit  $\|\nabla_{(x,u)} c_t(x,u)\|\leq GD$.
\end{assumption}

The fundamental Linear Quadratic Regulator problem is a specialization of the above to the case when the perturbations are i.i.d. Gaussian and the cost functions are positive quadratics, ie.
$$ c_t(x,u) = x^\top Q x + u^\top R u.$$

\paragraph{Objective} We consider the setting where the learner has no knowledge of $A,B$ and the perturbation sequence $w_t$. In this case, any inference of these quantities may only take place indirectly through the observation of the state $x_t$. Furthermore, the learner is made aware of the cost function $c_t$ only once the choice of $u_t$ has been made.

Under such constraints, the objective of the algorithm is to choose an (adaptive) sequence of controls that ensure that the cost suffered in this manner is comparable to that of the best choice of a linear controller with complete knowledge of system dynamics $A,B$ and the foreknowledge of the cost and perturbation sequences $\{c_t,w_t\}$. Formally, we measure regret as 
$$  \mbox{Regret} = J(\mathcal{A}) - \min_{K\in \mathcal{K}} J(K). $$
$\mathcal{K}$ is the set of $(\kappa,\gamma)$-strongly stable linear controllers defined below. The notion of strong stability, introduced in \cite{cohen2018online}, offers a quantification of the classical notion of a stable controller in manner that permits a discussion on non-asymptotic regret bounds.

\begin{definition}[Strong Stability]
A linear controller $K$ is $(\kappa,\gamma)$-strongly stable for a linear dynamical system specified via $(A,B)$ if there exists a decomposition of $A-BK=QLQ^{-1}$ with $\|L\|\leq 1-\gamma$, and $\|A\|,\|B\|,\|K\|,\|Q\|,\|Q^{-1}\|\leq \kappa$. 
\end{definition}

We also assume the learner has access to a fixed stabilizing controller $\mathbb{K}$. When operating under unknown transition matrices, the knowledge of a stabilizing controller permits the learner to prevent an inflation of the size of the state beyond reasonable bounds.

\begin{assumption}\label{a:a4}
The learner knows a linear controller $\mathbb{K}$ that is $(\kappa,\gamma)$-strongly stable for the true, but unknown, transition matrices $(A,B)$ defining the dynamical system.
\end{assumption}

The non-triviality of the regret guarantee rests on the benchmark set not being empty. As noted in \cite{cohen2018online}, a sufficient condition to ensure the existence of a strongly stable controller is the controllability of the linear system $(A,B)$. Informally, controllability for a linear system is characterized by the ability to drive the system to any desired state through appropriate control inputs in the presence of deterministic dynamics, ie. $x_{t+1} = Ax_t+Bu_t$. 

\begin{definition}[Strong Controllability]
For a linear dynamical system $(A,B)$, define, for $k\geq 1$, a matrix $C_k\in \reals^{n\times km}$ as 
$$C_k=[B, AB, A^2B\dots A^{k-1}B].$$
A linear dynamical system $(A,B)$ is controllable with controllability index $k$ if $C_k$ has full row-rank. In addition, such a system is also $(k,\kappa)$-strongly controllable if $\|(C_k C_k^\top)^{-1}\|\leq \kappa$.
\end{definition}

As with stability, a quantitative analog of controllability first suggested in \cite{cohen2018online} is presented above. It is useful to note that, as a consequence of the Cayley-Hamiltion theorem, for a controllable system the controllability index is always at most the dimension of the state space. We adopt the assumption that the system $(A-B\mathbb{K},B)$ is $(k,\kappa)$ strongly controllable.

\begin{assumption}\label{a:a3}
The linear dynamical system $(A-B\mathbb{K},B)$ is $(k,\kappa)$-strongly controllable.
\end{assumption}

\section{Preliminaries}
This section sets up the concepts that aid the algorithmic description and the analysis. 

\subsection{Parameterization of the Controller}
The total cost objective of a linear controller is non-convex in the canonical parameterization \citep{fazel2018global}, ie. $J(K)$ is not convex in $K$. To remedy this, we use an alternative perturbation-based parameterization for controller, recently proposed in \cite{agarwal2019online}, where the advised control is linear in the past perturbations (as opposed to the state). This permits that the offline search for an optimal controller may be posed as a convex program.

\begin{definition}
A perturbation-based policy $M=(M^{[0]},\dots M^{[H-1]})$ chooses control $u_t$ at state $x_t$,
\[u_t = -\mathbb{K}x_t + \sum_{i=1}^{H}M^{[i-1]}w_{t-i}.\]
\end{definition}

\subsection{State Evolution}
Under the execution of a stationary policy $M$, the state may be expressed as a linear transformation $\Psi$ (defined below) of the perturbations, where the transformation $\Psi$'s is linear in the  matrices $M$'s.

\[x_{t+1} = (A-B\mathbb{K})^{H+1}x_{t-H} + \sum_{i=0}^{2H} \Psi_{i}(M|A,B)w_{t-i}\]

\begin{definition}
For a matrix pair $(A,B)$, define the state-perturbation transfer matrix:
\[\Psi_{i}(M|A,B) = (A-B\mathbb{K})^i \mathbf{1}_{i\leq H}  + \sum_{j=0}^H (A-B\mathbb{K})^j B M^{[i-j-1]} \mathbf{1}_{i-j\in [1,H]}.\]
\end{definition}

\begin{definition}
Define the surrogate state $y_{t+1}$ and the surrogate action $v_{t}$ as stated below. The surrogate cost $f_t$ as chosen to be the specialization of the $t$-th cost function with the surrogate state-action pair as the argument.
\begin{align*}
y_{t+1}(M|A,B,\{w\}) &= \sum_{i=0}^{2H} \Psi_{i}(M|A,B)w_{t-i}\\
v_{t}(M|A,B,\{w\}) &= -\mathbb{K}y_t(M|A,B,\{w\}) + \sum_{i=1}^{H} M^{[i-1]}w_{t-i}\\
f_t(M|A,B,\{w\}) &= c_t(y_t(M|A,B,\{w\}), v_t(M|A,B,\{w\}))
\end{align*}
\end{definition}

\section{The Algorithm}

Our approach follows the explore-then-commit paradigm, identifying the underlying the deterministic-equivalent dynamics to within some accuracy using random inputs in the exploration phase. Such an approximate recovery of parameters permits an approximate recovery of the perturbations, thus facilitating the execution of the perturbation-based controller on the approximated perturbations. 

\begin{algorithm}[ht!]
	\caption{Adversarial control via system identification.}
	\label{alg:complete}
	\begin{algorithmic}[]
		\STATE \textbf{Input:} learning rate $\eta$, horizon $H$, number of iterations $T$, rounds of exploration $T_0$.
		\STATE \textbf{\underline{Phase 1:} System Identification.}
		\STATE Call Algorithm~\ref{alg:B} with a budget of $T_0$ rounds to obtain system estimates $\hat{A},\hat{B}$.
		\STATE \textbf{\underline{Phase 2:} Robust Control.}
		\STATE Define the constraint set $\mathcal{M}$ as $\mathcal{M} = \{M=(M^{[0]},\dots M^{[H-1]}): \|M^{[i-1]}\|\leq \kappa^4(1-\gamma)^i\}$.
		\STATE Initialize $\hat{w}_{T_0}=x_{T_0+1}$ and $\hat{w}_t=0$ for $t< T_0$.
		\FOR{$t = T_0+1, \ldots, T$}
		\STATE Choose the action
		\[u_t = -\mathbb{K}x_t + \sum_{i=1}^H M_t^{[i-1]}\hat{w}_{t-i}.\]
		\STATE Observe the new state $x_t$, the cost function $c_t(x,u)$.
		\STATE Record an estimate $\hat{w}_{t} = x_{t+1}  -\hat{A} x_t -\hat{B}u_t$.
		\STATE Update $M_{t+1}=\Pi_\mathcal{M} (M_t - \eta\nabla f_t(M_t | \hat{A},\hat{B},\{\hat{w}\}))$.
		\ENDFOR
	\end{algorithmic}
\end{algorithm}

\begin{algorithm}
	\caption{System identification via random inputs.}
	\label{alg:B}
	\begin{algorithmic}[]
		\STATE Input: number of iterations $T_0$.
		\FOR{$t = 0, \ldots, T_0$}
		\STATE Execute the control $u_t  = -\mathbb{K}x_t+\eta_t$ with $\eta_t \sim_{i.i.d.}
		\{\pm 1\}^{n}$.
		\STATE Record the observed state $x_t$. 
		\ENDFOR
		\STATE Declare $N_j = \frac{1}{T_0-k}\sum_{t=0}^{T_0-k-1} x_{t+j+1} \eta_{t}^\top $, for all $j \in [k] $.
		\STATE Define $C_0=(N_0, \dots N_{k-1})$, $C_1=(N_1,\dots N_k)$, and return $\hat{A},\hat{B}$ as
		\[ \hat{B} = N_0,\quad \hat{A'} = C_1 C_0^\top (C_0 C_0^\top)^{-1},	\quad \hat{A} = \hat{A'}+\hat{B}\mathbb{K}. \]
	\end{algorithmic}
\end{algorithm}

\begin{theorem} \label{thm:main}
Under the assumptions \ref{a:a1}, \ref{a:a2}, \ref{a:a3}, \ref{a:a4}, when $H=\Theta(\gamma^{-1}\log (\kappa^2 T))$, $\eta = \Theta(GW\sqrt{T})^{-1}$, $T_0=\Theta(T^{2/3}\log{\delta^{-1}})$, the regret incurred by Algorithm~\ref{alg:complete} for controlling an unknown linear dynamical system admits the upper bound~\footnote{If one desires the regret to scale as $GW^2$, as may be rightly demanded due to Assumption~\ref{a:a2}, it suffices to choose the exploration scheme in Algorithm~\ref{alg:B} as $\{\pm W\}^n$, while introducing a multiplicative factor of $W^{-2}$ in the computation $N_j$'s. Such modifications ensure a natural scaling for all terms involving costs.} stated below with probability at least $1-\delta$.
$$ \textrm{Regret} = O(\poly(\kappa,\gamma^{-1},k,m,n,G,W)T^{2/3}\log \delta^{-1})$$
\end{theorem}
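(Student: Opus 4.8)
The analysis follows the explore-then-commit structure of the algorithm, so I would first split the regret as $\regret = R_1 + R_2$, where $R_1$ collects the $T_0$ rounds of Phase~1 and $R_2$ the remaining rounds of Phase~2. Since the injected inputs $\eta_t\in\{\pm1\}^n$ and the perturbations are bounded, the closed loop under the stabilizing $\mathbb{K}$ keeps the state uniformly bounded (a consequence of $(\kappa,\gamma)$-strong stability), so every per-round cost is $\poly(\kappa,\gamma^{-1},W,G)$ and hence $R_1 = O(\poly\cdot T_0)$. The whole game is then to show $R_2 = O(\poly\cdot(T\eps_{AB} + \sqrt T))$, where $\eps_{AB}:=\max(\|\hat A-A\|,\|\hat B-B\|)$ is the system-identification error; balancing $T_0$ against $T\eps_{AB}\sim T/\sqrt{T_0}$ then forces $T_0\sim T^{2/3}$ and yields the claimed rate.

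\textbf{Step 1 (adversarial sys-id).} Under exploration the dynamics read $x_{t+1} = (A-B\mathbb{K})x_t + B\eta_t + w_t$, so unrolling gives $x_{t+j+1} = (A-B\mathbb{K})^j B\,\eta_t + \xi_t$, where $\xi_t$ depends only on $\{\eta_s\}_{s\ne t}$ and the fixed $\{w_s\}$. Because the $\eta_t$ are independent across time with $\mathbb{E}[\eta_t\eta_t^\top]=I$ and zero mean, $\mathbb{E}[\xi_t\eta_t^\top]=0$ and $\mathbb{E}[N_j] = (A-B\mathbb{K})^jB$. Hence $C_0\to C_k$ and $C_1\to (A-B\mathbb{K})C_k$ in expectation, and the estimator $\hat A' = C_1C_0^\top(C_0C_0^\top)^{-1}$ recovers $A-B\mathbb{K}$ exactly in the noiseless limit; the $(k,\kappa)$-strong-controllability assumption is precisely what keeps $(C_0C_0^\top)^{-1}$ well-conditioned so that this inversion is stable. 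For concentration I would write $N_j - \mathbb{E}[N_j]$ as a sum of a martingale-difference part (the terms $(A-B\mathbb{K})^{j+1}x_t\eta_t^\top$ and the cross terms $\eta_s\eta_t^\top$, $s\ne t$, are mean-zero given the past) plus the independent term $(A-B\mathbb{K})^jB(\eta_t\eta_t^\top-I)$, and apply a matrix Azuma/Freedman inequality together with the uniform state bound to get $\|N_j-\mathbb{E}[N_j]\| = \tilde O(\poly/\sqrt{T_0})$ w.p. $1-\delta$. A standard perturbation bound for the pseudo-inverse then yields $\eps_{AB} = \tilde O(\poly(\kappa,\gamma^{-1},k,m,n,W)/\sqrt{T_0})$.

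\textbf{Step 2 (propagating the error through control).} In Phase~2 one has $\hat w_t - w_t = (A-\hat A)x_t + (B-\hat B)u_t$, so once states and controls are shown bounded by some $D=\poly(\kappa,\gamma^{-1},W)$ --- again by strong stability of $\mathbb{K}$, geometric decay of the closed loop, and the explicit bound $\|M^{[i-1]}\|\le\kappa^4(1-\gamma)^i$ built into $\mathcal M$, all established by induction on $t$ --- we get $\|\hat w_t - w_t\|\le \eps_{AB} D$. I would then decompose $R_2$ as (i) the gap between the true Phase-2 cost and $\sum_t f_t(M_t\mid\hat A,\hat B,\{\hat w\})$; (ii) the online-convex-optimization regret of the projected gradient updates on the surrogate losses $f_t(\cdot\mid\hat A,\hat B,\{\hat w\})$; (iii) the gap between $\sum_t f_t(M_K\mid\hat A,\hat B,\{\hat w\})$ and $\sum_t f_t(M_K\mid A,B,\{w\})$ for the comparator; and (iv) the gap between the true surrogate cost of the best $M_K\in\mathcal M$ and $J(K)$ for the best $(\kappa,\gamma)$-strongly stable $K$. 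For (ii), convexity of $f_t$ in $M$, the bounded diameter of $\mathcal M$, and bounded surrogate gradients give the usual $O(GW\cdot\mathrm{diam}(\mathcal M)\sqrt T)=O(\poly\sqrt T)$ bound for $\eta=\Theta(GW\sqrt T)^{-1}$. For (iv), I would invoke the representational result that perturbation policies in $\mathcal M$ approximate every strongly stable $K$ up to error exponentially small in $H$; with $H=\Theta(\gamma^{-1}\log(\kappa^2 T))$ the truncation term $(A-B\mathbb{K})^{H+1}x_{t-H}$ and this approximation gap are each $O(\poly/T)$ per step, hence $O(\poly)$ in total. The terms (i) and (iii) are where the estimation error enters: both reduce, via Lipschitzness of $c_t$, to controlling $\|y_t(M\mid\hat A,\hat B,\{\hat w\})-y_t(M\mid A,B,\{w\})\|$ and the analogous difference for $v_t$, uniformly over $M\in\mathcal M$, which requires a perturbation bound on $\Psi_i(M\mid\hat A,\hat B)-\Psi_i(M\mid A,B)$ together with the $\|\hat w-w\|$ bound above; each such difference is $\poly(\kappa,\gamma^{-1},H,W)\cdot\eps_{AB}$, so summing over $t$ contributes $O(\poly\cdot T\eps_{AB})$.

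\textbf{Main obstacle.} The two genuinely delicate steps are the adversarial system identification of Step~1 --- showing the correlation estimator concentrates at rate $1/\sqrt{T_0}$ even though the $w_t$ are adversarial and non-zero-mean, which forces the martingale/independence splitting above rather than a naive least-squares argument --- and the perturbation analysis $\Psi_i(M\mid\hat A,\hat B)$ versus $\Psi_i(M\mid A,B)$ in Step~2, since an error in $(A,B)$ is amplified through up to $H$ powers of the closed-loop matrix; keeping this amplification polynomial (using stability of $A-B\mathbb{K}$ and the geometric decay of $M$) rather than exponential in $H$ is what makes the final $\poly(\cdot)T^{2/3}$ bound go through. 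Everything else is bookkeeping around the $T_0\sim T^{2/3}$ balance.
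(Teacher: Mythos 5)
Your plan is correct, and it reaches the paper's bound by the same overall skeleton: explore-then-commit, the correlation estimator $N_j \approx (A-B\mathbb{K})^j B$ with martingale (matrix Azuma) concentration and a linear-system inversion conditioned on $(k,\kappa)$-strong controllability (the paper's Lemmas~\ref{l:conc} and \ref{l:axb}, Theorem~\ref{thm:rec}), the error propagation $\|\hat{w}_t - w_t\| \leq \varepsilon_{A,B}\cdot\poly(\kappa,\gamma^{-1},W)$ established by induction (Lemma~\ref{l:ugly}), and the $T_0\sim T^{2/3}$ balance. Where you genuinely diverge is the Phase-2 analysis: you re-open the known-system machinery of \cite{agarwal2019online} and re-prove its components on the estimated system --- your terms (i) truncation, (ii) OCO regret on the surrogate losses, and (iv) policy-class approximation --- whereas the paper proves a short Simulation Lemma (Lemma~\ref{l:sim}): because $\hat{w}_t := x_{t+1}-\hat{A}x_t-\hat{B}u_t$, the learner's actual trajectory is \emph{exactly} the trajectory of the known-system algorithm run on the fictitious system $(\hat{A},\hat{B},\{\hat{w}\})$, so your (i), (ii) and (iv) collapse into a single black-box invocation of Theorem~\ref{thm:old}, after checking only that $\mathbb{K}$ remains $(2\kappa,\gamma/2)$-strongly stable on $(\hat{A},\hat{B})$ (Lemma~\ref{l:stab}) and that $\|\hat{w}_t\|$ stays bounded (Lemma~\ref{l:ugly}). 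Your term (iii) corresponds to the paper's Lemma~\ref{l:lips}, which avoids perturbing the transfer matrices $\Psi_i$ altogether: it compares the trajectories of the fixed linear comparator $K$ on the two systems and invokes Lemma~\ref{l:elad}, namely $\sum_{t\geq 0}\|(L+\Delta L)^t-L^t\|\leq \gamma^{-2}\|\Delta L\|$, which disposes of exactly the $H$-fold error amplification you flag as the main obstacle, working with powers of the closed loop rather than with $\Psi_i$. Your route buys self-containedness; the paper's buys modularity and far less bookkeeping. Two cautions if you execute your version: your term (i) is small only because the simulation identity makes the real states consistent with the fictitious dynamics \emph{by construction} (without noting this it would naively accrue another $O(T\varepsilon_{A,B})$ from state mismatch), so make that explicit; and both your (iii) and (iv) apply representational/perturbation results on the \emph{estimated} system to a comparator $K$ that is strongly stable only with respect to the \emph{true} $(A,B)$, so you need a preservation-of-stability step in the style of Lemma~\ref{l:stab} for $K$ itself, not just for $\mathbb{K}$ --- the paper does precisely this inside the proof of Lemma~\ref{l:lips}.
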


\section{Regret Analysis}
To present the proof concisely, we set up a few articles of use. For a generic algorithm $\mathcal{A}$ operating on a generic linear dynamical system specified via a matrix pair $(A,B)$ and perturbations $\{w\}$, let
\begin{enumerate}
    \item $J(\mathcal{A}|A,B,\{w\})$ be the cost of executing $\mathcal{A}$, as incurred on the last $T-T_0$ time steps,
    \item $x_t(\mathcal{A}|A,B,\{w\})$ be the state achieved at time step $t$, and
    \item $u_t(\mathcal{A}|A,B,\{w\})$ be the control executed at time step $t$.
\end{enumerate}

We also note that following result from~\cite{agarwal2019online} that applies to the case when the matrices $(A,B)$ that govern the underlying dynamics are made known to the algorithm. 

\begin{theorem}[Known System; \cite{agarwal2019online}]\label{thm:old}
Let $\mathbb{K}$ be a $(\kappa,\gamma)$-strong stable controller for a system $(A,B)$, $\{w_t\}$ be a (possibly adaptive) perturbation sequence with $\|w_t\|\leq W$, and $c_t$ be costs satisfying Assumption~\ref{a:a2}. Then there exists an algorithm $\mathcal{A}$ (Algorithm \ref{alg:complete}.2 with a learning rate of $\eta=\Theta(GW\sqrt{T})^{-1}$ and $H=\Theta(\gamma^{-1}\log (\kappa^2 T))$), utilizing $\mathbb{K},(A,B)$, that guarantees
\[J(\mathcal{A}|A,B,\{w\}) - \min_{K\in \mathcal{K}} J(K|A,B,\{w\})\leq O(\poly(m,n,\kappa,\gamma^{-1})GW^2\sqrt{T}\log T).\]
\end{theorem}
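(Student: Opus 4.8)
The plan is to establish the bound through three conceptually separate reductions, all resting on the disturbance-action parameterization of the Preliminaries. Because $(A,B)$ and $\mathbb{K}$ are known in this regime, the perturbations $w_t = x_{t+1}-Ax_t-Bu_t$ are recoverable exactly, so the surrogate quantities $y_t, v_t, f_t$ are computable and the projected-gradient step in the control phase is well-defined. I would first record the two structural facts that make everything work. Convexity: since $y_t(M)$ and $v_t(M)$ are affine in the blocks $M^{[i]}$ (the transfer matrix $\Psi_i$ is linear in $M$) and each $c_t$ is convex, the surrogate loss $f_t(M)$ is convex in $M$. Boundedness: for $M\in\mathcal{M}$ the coefficients obey $\|M^{[i-1]}\|\le\kappa^4(1-\gamma)^i$, and since $\mathbb{K}$ is $(\kappa,\gamma)$-strongly stable, $\|(A-B\mathbb{K})^i\|\le\kappa^2(1-\gamma)^i$; summing the geometric series bounds $\|y_t\|,\|v_t\|$ and the realized $\|x_t\|,\|u_t\|$ by a $\poly(\kappa,\gamma^{-1})W$ quantity I will call $D$. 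This places us inside Assumption~\ref{a:a2}, so $f_t$ is $O(GD)$-Lipschitz over $\mathcal{M}$, and $\mathcal{M}$ has diameter $O(\poly(\kappa,\gamma^{-1}))$.

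Second (approximation), I would show the comparator class is essentially contained in $\mathcal{M}$. Given a strongly stable $K$, set $M_K^{[i-1]}=(\mathbb{K}-K)(A-BK)^{i-1}$; these blocks decay geometrically and fit the constraint defining $\mathcal{M}$, while the disturbance-action controller $M_K$ reproduces the trajectory of $K$ up to a truncation tail governed by $\|(A-BK)^H\|= O(\kappa^2(1-\gamma)^H)$. Taking $H=\Theta(\gamma^{-1}\log(\kappa^2 T))$ drives this tail below $1/T$ per step, yielding $\min_{M\in\mathcal{M}}\sum_t f_t(M)\le \min_{K\in\mathcal{K}}J(K|A,B,\{w\})+O(\poly(\kappa,\gamma^{-1})GW^2)$.

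Third (the with-memory reduction and online learning, the crux), I would relate $\sum_t c_t(x_t,u_t)$ to $\sum_t f_t(M_t)$. Two gaps arise. The realized state carries a term $(A-B\mathbb{K})^{H+1}x_{t-H}$ absent from $y_t$; strong stability makes this $O((1-\gamma)^{H+1}\|x_{t-H}\|)$, negligible by the choice of $H$. The genuinely delicate gap is that $x_t$ depends on the whole history $M_{t-H},\dots,M_t$ whereas $f_t(M_t)$ pretends $M_t$ was always played. I would handle this via the OGD-with-memory framework: define the memory loss $F_t(M_{t-H:t})=c_t(x_t,u_t)$, verify it is coordinatewise Lipschitz with constant decaying geometrically in the lag, and bound $|F_t(M_{t-H:t})-f_t(M_t)|\le L\sum_{j=1}^H\|M_t-M_{t-j}\|\le LH^2\eta\max_t\|\nabla f_t\|$ using that consecutive iterates move at most $\eta\|\nabla\|$. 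Summing gives an $O(\eta H^2 T\,\poly\,G^2W^2)$ movement penalty, while the standard online-gradient-descent bound contributes $O(\poly/\eta+\eta T G^2W^2)$ against the fixed comparator. Balancing with $\eta=\Theta(GW\sqrt{T})^{-1}$ makes both $\sqrt{T}$, absorbing $H^2=\mathrm{polylog}$ into the $\log T$.

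Assembling the three pieces — OGD regret on $\{f_t\}$, the approximation error, and the truncation plus movement errors — and substituting the stated $\eta$ and $H$ gives the claimed $O(\poly(m,n,\kappa,\gamma^{-1})GW^2\sqrt{T}\log T)$. I expect the with-memory step to be the main obstacle: one must check that the geometric-in-lag Lipschitz property of $F_t$ holds uniformly (which relies on every played $M_t$ keeping the effective dynamics strongly stable, hence on the constraint set $\mathcal{M}$), and that the resulting movement cost telescopes to $O(\sqrt{T})$ rather than accruing an extra factor of $H$ or $T$.
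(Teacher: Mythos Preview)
Your proposal is correct and follows exactly the approach of \cite{agarwal2019online}: the disturbance-action parameterization, convexity and boundedness of the surrogate losses, approximation of any strongly stable $K$ by some $M_K\in\mathcal{M}$ with geometrically decaying blocks, and the online-convex-optimization-with-memory reduction balanced via $\eta=\Theta(GW\sqrt{T})^{-1}$ and $H=\Theta(\gamma^{-1}\log(\kappa^2 T))$. Note that the present paper does not itself prove this theorem at all --- it is quoted verbatim as a black-box result from \cite{agarwal2019online} and used as an ingredient in the proof of Theorem~\ref{thm:main} --- so your sketch in fact supplies more detail than the paper does.
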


\begin{proof}{of Theorem~\ref{thm:main}.}
Define $K= \argmin_{K\in\mathcal{K}} J(K)$. Let $J_0$ be the contribution to the {\em regret} associated with the first $T_0$ rounds of exploration. By Lemma~\ref{l:exp}, we have that $J_0 \leq 16 T_0 Gn\kappa^8\gamma^{-2}W^2$. 

Let $\mathcal{A}$ refer to the algorithm, from \cite{agarwal2019online}, executed in Phase 2. By Lemma \ref{l:sim},
\begin{align*}
    \textrm{Regret} &\leq J_0 + \sum_{t=T_0+1}^T c_t(x_t, u_t) - J(K|A,B,\{w\}), \\
    &\leq J_0 + (J(\mathcal{A}|\hat{A},\hat{B},\{\hat{w}\}) - J(K|\hat{A},\hat{B},\{\hat{w}\})) + (J(K|\hat{A},\hat{B},\{\hat{w}\}) - J(K|A,B,w)).
\end{align*}

Let $\|A-\hat{A}\|,\|B-\hat{B}\|\leq \varepsilon_{A,B}$ and $\varepsilon_{A,B}\leq 10^{-3}\kappa^{-10}\gamma^2$ in the arguments below. The middle term above can be upper bounded by the regret of algorithm $\mathcal{A}$ on the fictitious system $(\hat{A},\hat{B})$ and the perturbation sequence $\{\hat{w}\}$. Before we can invoke Theorem~\ref{thm:old}, observe that 
\begin{enumerate}
    \item By Lemma~\ref{l:stab}, $\mathbb{K}$ is $(2\kappa, 0.5\gamma)$-strongly stable on $(\hat{A},\hat{B})$, as long as $\varepsilon_{A,B}\leq 0.25\kappa^{-3}\gamma$,
    \item Lemma~\ref{l:ugly} ensures $\|\hat{w_t}\|\leq 2\sqrt{n}\kappa^3\gamma^{-1}W$, as long as $\varepsilon_{A,B}\leq 10^{-3}\kappa^{-10}\gamma^2$.
\end{enumerate}
With the above observations in place, Theorem~\ref{thm:old} guarantees 
\[ J(\mathcal{A}|\hat{A},\hat{B},\{\hat{w}\}) - J(K|\hat{A},\hat{B},\{\hat{w}\}) \leq \poly(m,n,\kappa,\gamma^{-1})GW^2 \sqrt{T}\log T.\]

The last expression in the preceding line can be bound by Lemma~\ref{l:lips}, since Lemma~\ref{l:ugly} also bounds $\|w_t-\hat{w}_t\|\leq 20\sqrt{n}\kappa^{11} \gamma^{-3} W\varepsilon_{A,B}$ whenever $\varepsilon_{A,B}\leq 10^{-3}\kappa^{-10}\gamma^2$ for $t\geq T_0+1$.
\[ |J(K|\hat{A},\hat{B},\{\hat{w}\}) - J(K|A,B,w)| \leq 32 G n\kappa^{11}\gamma^{-3}W^2 + 20^3 T G n\kappa^{22} \gamma^{-7} W^2 \varepsilon_{A,B}\]
Set $\varepsilon_{A,B} = \min(10^{-3}\kappa^{-10}\gamma^2,T^{-\frac{1}{3}})$ in Theorem~\ref{thm:rec} to conclude.
\end{proof}

The regret minimizing algorithm, Phase 2 of Algorithm~\ref{alg:complete}, chooses $M_t$ so as to optimize for the cost of the perturbation-based controller on a {\em fictitious} linear dynamical system $(\hat{A}, \hat{B})$ subject to the perturbation sequence $\{\hat{w}\}$. The following lemma shows that the definition of $\hat{w}_t$ ensures that state-control sequence visited by Algorithm~\ref{alg:complete} coincides with the sequence visited by the regret-minimizing algorithm on the {\em fictitious} system.

\begin{lemma}[Simulation Lemma]\label{l:sim}
Let $\mathcal{A}$ be the algorithm, from \cite{agarwal2019online}, executed in Phase 2, and $(x_t,u_t)$ be the state-control iterates produced by Algorithm~\ref{alg:complete}. Then for $t\geq T_0+1$,
\[x_t=x_t(\mathcal{A}|\hat{A},\hat{B}, \{\hat{w}\}), \quad u_t=u_t(\mathcal{A}|\hat{A},\hat{B}, \{\hat{w}\}),\quad and \quad \sum_{t=T_0+1}^T c_t(x_t,u_t) = J(\mathcal{A}|\hat{A},\hat{B},\{\hat{w}\}).\]

\end{lemma}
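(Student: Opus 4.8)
The plan is to prove this by induction on $t$, exploiting the fact that $\hat{w}_t$ is defined precisely so that the fictitious system $(\hat{A},\hat{B},\{\hat{w}\})$ reproduces the observed trajectory exactly, with no residual error. The core observation is that the recorded estimate $\hat{w}_t = x_{t+1}-\hat{A}x_t-\hat{B}u_t$ is, by definition, an identity rather than an approximation: rearranging gives $x_{t+1} = \hat{A}x_t+\hat{B}u_t+\hat{w}_t$, which is exactly the state-transition rule governing $\mathcal{A}$'s run on the fictitious system $(\hat{A},\hat{B})$ with perturbations $\{\hat{w}\}$. The actual run of Algorithm~\ref{alg:complete} is therefore itself a legitimate realization of $\mathcal{A}$ on that system, and matching the two reduces to checking that they share initial conditions and obey the same update rules.

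First I would isolate the three recursions that uniquely determine the trajectory of $\mathcal{A}$ on $(\hat{A},\hat{B},\{\hat{w}\})$ given an initial state: (i) the transition $x_{t+1}=\hat{A}x_t+\hat{B}u_t+\hat{w}_t$; (ii) the control law $u_t=-\mathbb{K}x_t+\sum_{i=1}^H M_t^{[i-1]}\hat{w}_{t-i}$; and (iii) the gradient update $M_{t+1}=\Pi_{\mathcal{M}}(M_t-\eta\nabla f_t(M_t|\hat{A},\hat{B},\{\hat{w}\}))$. Then I would verify that the iterates $(x_t,u_t,M_t)$ produced by Algorithm~\ref{alg:complete} satisfy each recursion verbatim: (i) holds by the definition of $\hat{w}_t$, (ii) is the control the algorithm plays, and (iii) is its update rule. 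A crucial point to confirm is that $\mathcal{A}$, running on the fictitious system, infers exactly the sequence $\{\hat{w}\}$: since it knows $(\hat{A},\hat{B})$ it reconstructs perturbations as $x_{t+1}-\hat{A}x_t-\hat{B}u_t$, and once states agree by the inductive hypothesis this equals $\hat{w}_t$, so the surrogate cost $f_t$ and its gradient coincide across both runs.

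The induction then proceeds with the initial state at $t=T_0+1$ set to the observed $x_{T_0+1}$ and the initial $M$ shared. I match $u_{T_0+1}$ via (ii), using the initialized history $\hat{w}_{T_0}=x_{T_0+1}$ and $\hat{w}_{t}=0$ for $t<T_0$ to supply the required past perturbations. Assuming $x_t$ and $M_t$ agree, (ii) gives agreement of $u_t$, (i) propagates state agreement to $x_{t+1}$, and (iii) propagates agreement to $M_{t+1}$, closing the loop. I would take care with the indexing near the boundary, since the control at times just past $T_0+1$ reaches back to $\hat{w}$ values at or before $T_0$, which are exactly the initialized values. Summing $c_t(x_t,u_t)=c_t(x_t(\mathcal{A}|\cdot),u_t(\mathcal{A}|\cdot))$ over $t=T_0+1,\dots,T$ then yields the cost identity, since $J(\mathcal{A}|\hat{A},\hat{B},\{\hat{w}\})$ is defined as the cost over precisely these last $T-T_0$ steps.

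The main (and essentially only) conceptual obstacle is recognizing that there is no approximation in this lemma despite $(\hat{A},\hat{B})$ being inaccurate: the discrepancy between the true and estimated matrices is fully absorbed into $\hat{w}_t$ by construction, so the simulation is exact. The accuracy of $(\hat{A},\hat{B})$ never enters here; it is deferred to the later bounds on $\|\hat{w}_t-w_t\|$ and on the cost gap between the fictitious and true systems. The only genuine care required is bookkeeping: confirming the causal ordering, so that each $u_t$ depends only on past $\hat{w}$ values and $\hat{w}_t$ can be computed after $u_t$ with no circularity, and checking the alignment of the perturbation history at the phase boundary.
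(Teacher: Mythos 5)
Your proposal is correct and follows essentially the same route as the paper: induction on the state identity $x_t = x_t(\mathcal{A}\mid\hat{A},\hat{B},\{\hat{w}\})$, anchored by the initialization $\hat{w}_{T_0}=x_{T_0+1}$ and closed by the observation that $\hat{w}_t = x_{t+1}-\hat{A}x_t-\hat{B}u_t$ makes the fictitious transition an exact identity rather than an approximation. Your additional bookkeeping --- explicitly tracking the $M_t$ update recursion and the perturbation history at the phase boundary --- is a harmless elaboration of details the paper's proof leaves implicit.
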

\begin{proof}
This proof follows by induction on $x_t=x_t(\mathcal{A}|\hat{A},\hat{B},\{\hat{w}\})$. Note that at the start of $\mathcal{A}$, it is fed the initial state $x_{T_0+1}(\mathcal{A}|\hat{A},\hat{B},\{\hat{w}\}) = x_{T_0+1}$ by the choice of $\hat{w}_{T_0}$. Say that for some $t\geq T_0+1$, it happens that the inductive hypothesis is true. Consequently,
\begin{align*}
    u_{t}(\mathcal{A}|\hat{A},\hat{B},\{\hat{w}\}) &= -\mathbb{K}x_{t}(\mathcal{A}|\hat{A},\hat{B},\{\hat{w}\}) + \sum_{i=1}^H M_t^{[i-1]}\hat{w}_{t-i}= -\mathbb{K}x_{t} + \sum_{i=1}^H M_t^{[i-1]}\hat{w}_{t-i} = u_t.\\
    x_{t+1}(\mathcal{A}|\hat{A},\hat{B},\{\hat{w}\}) &= \hat{A}x_{t}(\mathcal{A}|\hat{A},\hat{B},\{\hat{w}\}) + \hat{B}u_{t}(\mathcal{A}|\hat{A},\hat{B},\{\hat{w}\}) + \hat{w_t} 
    = \hat{A}x_{t} + \hat{B}u_{t} + \hat{w_t} = x_{t+1}
\end{align*} 
This, in turn, implies by choice of $\hat{w}_t$ that the next states produced at the next time steps match.
\end{proof}

The lemma stated below guarantees that the strong stability of $\mathbb{K}$ is approximately preserved under small deviations of the system matrices.

\begin{lemma}[Preservation of Stability]\label{l:stab}
If $\mathbb{K}$ is $(\kappa,\gamma)$-strongly stable for a linear system $(A,B)$, ie. $A-BK=QLQ^{-1}$, then $\mathbb{K}$ is $(\kappa+\varepsilon_{A,B},\gamma -2\kappa^3\varepsilon_{A,B})$-strongly stable for $(\hat{A},\hat{B})$, ie. 
\[ \hat{A}-\hat{B}\mathbb{K} = Q\hat{L}Q^{-1}, \quad \|\hat{A}\|,\|\hat{B}\| \leq \kappa+\varepsilon_{A,B}, \quad \|\hat{L}\|\leq 1-\gamma+2\kappa^3\varepsilon_{A,B}, \]
as long as $\|A-\hat{A}\|,\|B-\hat{B}\|\leq \varepsilon_{A,B}$. Furthermore, in such case, the transforming matrices $Q$ that certify strong stability in both these cases coincide, and it holds $\|\hat{L}-L\|\leq 2\kappa^3\varepsilon_{A,B}$.
\end{lemma}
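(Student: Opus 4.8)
The plan is to retain the very same similarity transform $Q$ that certifies strong stability for the true system, rather than re-diagonalizing the perturbed closed-loop matrix. Concretely, I would \emph{define} $\hat{L} := Q^{-1}(\hat{A}-\hat{B}\mathbb{K})Q$. This choice discharges the structural requirement of the conclusion for free: by construction $\hat{A}-\hat{B}\mathbb{K}=Q\hat{L}Q^{-1}$ with the identical transforming matrices, so the final sentence of the lemma (that the $Q$'s coincide) holds automatically and no eigenvalue/eigenvector perturbation machinery is needed.

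The heart of the argument is then a single conjugation identity followed by a norm bound. Since both $L$ and $\hat{L}$ arise from conjugating the closed-loop matrices by the \emph{same} $Q$, their difference is the conjugate of the difference of the closed-loop matrices:
$$\hat{L}-L = Q^{-1}\big((\hat{A}-A)-(\hat{B}-B)\mathbb{K}\big)Q.$$
Applying submultiplicativity of the operator norm together with the hypotheses $\|Q\|,\|Q^{-1}\|,\|\mathbb{K}\|\leq\kappa$ and $\|\hat{A}-A\|,\|\hat{B}-B\|\leq\varepsilon_{A,B}$ gives
$$\|\hat{L}-L\| \leq \kappa^2\big(\varepsilon_{A,B}+\kappa\,\varepsilon_{A,B}\big) = \kappa^2(1+\kappa)\varepsilon_{A,B} \leq 2\kappa^3\varepsilon_{A,B},$$
where the final inequality uses the standing convention $\kappa\geq 1$ so that $1+\kappa\leq 2\kappa$. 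This is precisely the claimed perturbation bound on $\|\hat{L}-L\|$.

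The remaining clauses follow from triangle inequalities. For the spectral decay, $\|\hat{L}\|\leq\|L\|+\|\hat{L}-L\|\leq(1-\gamma)+2\kappa^3\varepsilon_{A,B}$, which is the stated $\hat{L}$-bound and identifies the degraded rate $\gamma-2\kappa^3\varepsilon_{A,B}$. For the magnitude bounds, $\|\hat{A}\|\leq\|A\|+\varepsilon_{A,B}\leq\kappa+\varepsilon_{A,B}$ and identically for $\hat{B}$, while $\|Q\|,\|Q^{-1}\|,\|\mathbb{K}\|\leq\kappa\leq\kappa+\varepsilon_{A,B}$ hold trivially. Collecting these verifies every requirement of $(\kappa+\varepsilon_{A,B},\,\gamma-2\kappa^3\varepsilon_{A,B})$-strong stability for $(\hat{A},\hat{B})$.

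I do not anticipate a genuine obstacle here: the entire proof is one conjugation identity and a handful of triangle inequalities. The only real decision is the choice to reuse $Q$ instead of attempting to re-diagonalize $\hat{A}-\hat{B}\mathbb{K}$; the latter would force one to control how the transform itself moves under the perturbation, which is both harder and unnecessary. Reusing $Q$ is what keeps the constant explicit and clean, yielding exactly the factor $2\kappa^3$.
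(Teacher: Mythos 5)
Your proposal is correct and follows essentially the same route as the paper: the paper also reuses the same $Q$, writing $\hat{A}-\hat{B}\mathbb{K} = Q\bigl(L+Q^{-1}((\hat{A}-A)-(\hat{B}-B)\mathbb{K})Q\bigr)Q^{-1}$, which is exactly your definition $\hat{L}=Q^{-1}(\hat{A}-\hat{B}\mathbb{K})Q$, and bounds the conjugated perturbation by $2\kappa^3\varepsilon_{A,B}$ via the same submultiplicativity argument. Your write-up merely makes explicit the triangle-inequality clauses and the use of $\kappa\geq 1$ that the paper leaves implicit.
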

\begin{proof}
Let $A-B\mathbb{K}= QLQ^{-1}$ with $\|Q\|,\|Q^{-1}\|,\|\mathbb{K}\|,\|A\|,\|B\|\leq \kappa$, $\|L\|\leq 1-\gamma$. Now
\begin{align*}
    \hat{A}-\hat{B}\mathbb{K} &= QLQ^{-1}+(\hat{A}-A)-(\hat{B}-B)\mathbb{K}\\
    &= Q(L+Q^{-1}((\hat{A}-A)-(\hat{B}-B)\mathbb{K})Q)Q^{-1}
\end{align*}
It suffices to note that $\|Q^{-1}((\hat{A}-A)-(\hat{B}-B)\mathbb{K})Q\|\leq 2\kappa^3\varepsilon_{A,B}$.
\end{proof}

The next lemma establishes that if the same linear state-feedback policy is executed on the actual and the {\em fictional} linear dynamical system, the difference between the costs incurred in the two scenarios varies proportionally with some measure of distance between the two systems.

\begin{lemma}[Stability of Value Function]\label{l:lips}
Let $\|A-\hat{A}\|, \|B-\hat{B}\|\leq \varepsilon_{A,B}\leq 0.25 \kappa^{-3}\gamma$, and $K$ be any $(\kappa,\gamma)$-strongly stable controller with respect to $(A,B)$. Then, for any perturbation sequence that satisfies $\|w_t-\hat{w}_t\|\leq \varepsilon_w \leq W_0$ except possibly for the first step where $\|\hat{w}_0\|\leq W_0$\footnote{This handles the case for $\hat{w}_{T_0}$ in Algorithm~\ref{alg:complete}. Assume that $W_0\geq W$.}, it holds
\[ |J(K|\hat{A},\hat{B},\{\hat{w}\}) - J(K|A,B,w)| \leq 10^3TG\kappa^8\gamma^{-3}W_0(\varepsilon_W+W_0\varepsilon_{A,B}) + 32G\kappa^5\gamma^{-2}W_0^2. \]
\end{lemma}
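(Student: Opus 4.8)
The plan is to bound the per-step cost discrepancy $|c_t(x_t,u_t) - c_t(\hat x_t,\hat u_t)|$ using the Lipschitz property of the costs (Assumption~\ref{a:a2}) and then control the trajectory deviation. Write $x_t = x_t(K|A,B,\{w\})$ and $\hat x_t = x_t(K|\hat A,\hat B,\{\hat w\})$ for the two trajectories, both driven by the feedback law $u_t=-Kx_t$, $\hat u_t=-K\hat x_t$. Since the closed-loop matrix obeys $\|(A-BK)^i\|\le\kappa^2(1-\gamma)^i$ and, by Lemma~\ref{l:stab} applied to $K$, $\|(\hat A-\hat BK)^i\|\le\kappa^2(1-\gamma/2)^i$ (here the hypothesis $\varepsilon_{A,B}\le 0.25\kappa^{-3}\gamma$ is used), both states are uniformly bounded, $\|x_t\|,\|\hat x_t\|=O(\kappa^2\gamma^{-1}W_0)$, using $\|\hat w_t\|\le 2W_0$. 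Hence $\|x_t\|,\|\hat x_t\|,\|u_t\|,\|\hat u_t\|\le D$ with $D=O(\kappa^3\gamma^{-1}W_0)$, and Assumption~\ref{a:a2} gives $|c_t(x_t,u_t)-c_t(\hat x_t,\hat u_t)|\le GD\,(\|x_t-\hat x_t\|+\|u_t-\hat u_t\|)\le 2\kappa GD\,\|x_t-\hat x_t\|$, since $u_t-\hat u_t=-K(x_t-\hat x_t)$.

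So the crux is the trajectory deviation $\delta_t := x_t-\hat x_t$. Subtracting the two closed-loop recursions and adding and subtracting $(A-BK)\hat x_t$ gives
\[\delta_{t+1} = (A-BK)\delta_t + \big((A-BK)-(\hat A-\hat BK)\big)\hat x_t + (w_t-\hat w_t),\]
a stable linear recursion (propagator $A-BK$) driven by two error terms. The first has norm $\le\|(A-BK)-(\hat A-\hat BK)\|\cdot\|\hat x_t\|\le(1+\kappa)\varepsilon_{A,B}\cdot O(\kappa^2\gamma^{-1}W_0)=O(\kappa^3\gamma^{-1}W_0\varepsilon_{A,B})$; the second is $\le\varepsilon_W$ at every step except the first, where the exceptional bound $\|\hat w_0\|\le W_0$ only yields $\|w_0-\hat w_0\|\le 2W_0$.

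Unrolling the recursion (from the common initial state, $\delta_0=0$) and using $\sum_{i\ge 0}\|(A-BK)^i\|\le\kappa^2\gamma^{-1}$ converts these driving bounds into a deviation bound. I would split $\delta_t$ into a \emph{persistent} part, driven by the uniformly small terms, with $\|\delta_t^{\mathrm{pers}}\|\le\kappa^2\gamma^{-1}\big(\varepsilon_W+O(\kappa^3\gamma^{-1}W_0\varepsilon_{A,B})\big)$ uniformly in $t$, and a \emph{transient} part carrying the single large first-step error, $\|\delta_t^{\mathrm{tr}}\|\le\kappa^2(1-\gamma)^{t-1}\cdot O(W_0)$. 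Summing the per-step bound $2\kappa GD\|\delta_t\|$ over the $\le T$ steps, the persistent part contributes $T\cdot 2\kappa GD\cdot\kappa^2\gamma^{-1}\big(\varepsilon_W+O(\kappa^3\gamma^{-1}W_0\varepsilon_{A,B})\big)=O(TG\kappa^8\gamma^{-3}W_0(\varepsilon_W+W_0\varepsilon_{A,B}))$, matching the first term, while the transient part contributes $2\kappa GD\cdot O(W_0)\sum_{t\ge 1}\kappa^2(1-\gamma)^{t-1}=2\kappa GD\cdot O(\kappa^2\gamma^{-1}W_0)=O(G\kappa^5\gamma^{-2}W_0^2)$, a $T$-independent term matching the second.

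The step I expect to be the main obstacle is exactly this separation of the transient from the persistent error. A naive bound that treats the large first-step discrepancy ($O(W_0)$ rather than $O(\varepsilon_W)$) on the same footing as the others would multiply it by the horizon and produce a ruinous $O(TW_0^2)$ term; the point is that the geometric contractivity of the closed loop confines its total cost contribution to a $T$-independent constant. Making this rigorous relies on the uniform contractivity of \emph{both} closed-loop matrices — which is where Lemma~\ref{l:stab} and the smallness hypothesis $\varepsilon_{A,B}\le 0.25\kappa^{-3}\gamma$ are essential — together with careful bookkeeping of the powers of $\kappa$ and $\gamma^{-1}$ accumulated through the state bound $D$, the geometric sums, and the feedback factor $\kappa$, to land on the stated $\kappa^8\gamma^{-3}$ and $\kappa^5\gamma^{-2}$ exponents.
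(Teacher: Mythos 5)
Your proof is correct, and its outer skeleton coincides with the paper's: uniform state bounds on both trajectories (the paper likewise applies Lemma~\ref{l:stab} to $K$ to get $(2\kappa,\gamma/2)$-strong stability on $(\hat{A},\hat{B})$, yielding $\|x_t(K|\hat{A},\hat{B},\{\hat w\})\|\leq 16\kappa^2\gamma^{-1}W_0$), the Lipschitz reduction of the cost gap to $O(G\kappa^3\gamma^{-1}W_0)\sum_t\|x_t-\hat{x}_t\|$, and the isolation of the one-time $O(W_0)$ first-step error, which in the paper appears as the geometrically decaying term $\kappa^2(1-\gamma)^tW_0$ and sums to the $T$-independent $32G\kappa^5\gamma^{-2}W_0^2$. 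Where you genuinely diverge is the mechanism for the trajectory deviation itself. The paper expands \emph{both} closed loops as power sums $x_{t+1}=\sum_i (A-BK)^i w_{t-i}$ and compares termwise, which forces it to control $\sum_i\|(A-BK)^i-(\hat{A}-\hat{B}K)^i\|$; it does so via Lemma~\ref{l:elad} (the bound $\sum_{t\geq 0}\|(L+\Delta L)^t-L^t\|\leq\gamma^{-2}\|\Delta L\|$) applied to the matrices $L,\hat{L}$ certifying strong stability, crucially exploiting the fact, recorded in Lemma~\ref{l:stab}, that the similarity transform $Q$ is \emph{common} to both systems so that $(A-BK)^i-(\hat{A}-\hat{B}K)^i=Q(L^i-\hat{L}^i)Q^{-1}$. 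Your recursion $\delta_{t+1}=(A-BK)\delta_t+\bigl((A-BK)-(\hat{A}-\hat{B}K)\bigr)\hat{x}_t+(w_t-\hat{w}_t)$ sidesteps both of these ingredients: you need only one stable propagator plus a uniform bound on $\|\hat{x}_t\|$, replacing the matrix-power perturbation lemma by the product of two geometric sums baked into the unrolled recursion (algebraically this is the same telescoping identity $(L+\Delta L)^t-L^t=\sum_i L^{t-1-i}\Delta L(L+\Delta L)^i$, but kept implicit), and your per-step persistent bound $\kappa^2\gamma^{-1}\varepsilon_W+O(\kappa^5\gamma^{-2}W_0\varepsilon_{A,B})$ reproduces the paper's exactly, with the $\varepsilon_{A,B}$ part dominating to give $\kappa^8\gamma^{-3}$. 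Your route is arguably more robust since it never needs the shared-$Q$ observation; the paper's buys a reusable, self-contained summable bound on matrix-power differences. The only blemishes are constant-level: $\|(\hat{A}-\hat{B}K)^i\|\leq 4\kappa^2(1-\gamma/2)^i$ rather than $\kappa^2(1-\gamma/2)^i$, and whether the Lipschitz scale $D$ of Assumption~\ref{a:a2} absorbs the extra $\kappa$ from $\|u\|\leq\kappa\|x\|$ (the paper's own bookkeeping is equally loose on this point), all immaterial at the stated $O(\cdot)$ precision.
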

\begin{proof}
Under the action of a linear controller $K$, which is $(\kappa,\gamma)$-strongly stable for $(A,B)$, it holds
\[x_{t+1}(K|A,B,\{w\}) = \sum_{t=0}^T (A-BK)^{i} w_{t-i}.\]
Consequently, $\|x_t(K|A,B,\{w\})\| \leq \kappa^2\gamma^{-1}W$, and, since $K$ is $(2\kappa, 2^{-1}\gamma)$-strongly stable for $(\hat{A}, \hat{B})$ by Lemma~\ref{l:stab},  $\|x_t(K|\hat{A},\hat{B},\{\hat{w}\}) \| \leq 16\kappa^2 \gamma^{-1} W_0$. It follows
\[ |J(\mathcal{A}|\hat{A},\hat{B},\{\hat{w}\}) - J(\mathcal{A}|A,B,w)| \leq 32G\kappa^3\gamma^{-1}W_0 \sum_{t=0}^T\|x_{t+1}(K|A,B,\{w\})-x_{t+1}(K|\hat{A},\hat{B},\{\hat{w}\})\| \]
Finally, by using strong stability of the controller, we have
\begin{align*}
    &\|x_{t+1}(K|A,B,\{w\})-x_{t+1}(K|\hat{A},\hat{B},\{\hat{w}\})\| \leq \sum_{t=0}^T \left\|(A-BK)^{i} w_{t-i}-  (\hat{A}-\hat{B}K)^{i} \hat{w}_{t-i}\right\| \\
    \leq & \sum_{t=0}^T \left( \left\|(A-BK)^{i} w_{t-i} - (A-BK)^{i} \hat{w}_{t-i}\right\| + \left\|(A-BK)^{i} \hat{w}_{t-i}  -(\hat{A}-\hat{B}K)^{i} \hat{w}_{t-i}\right\| \right)\\
    \leq & \kappa^2(1-\gamma)^t W_0 + \kappa^2\gamma^{-1}\varepsilon_w + 16W_0 \kappa^5 \gamma^{-2} \varepsilon_{A,B}
\end{align*}
The last line follows from invocation of Lemma~\ref{l:elad} on $L, L'$ matrices that certify the strongly stability of $K$ on $(A,B)$ and $(\hat{A},\hat{B})$ respectively.
\end{proof}

\begin{lemma}\label{l:elad}
For any matrix pair $L,\Delta L$, such that $\|L\|, \|L+\Delta L\| \leq 1-\gamma$, we have 
$$ \sum_{t=0}^\infty \| (L+\Delta L)^t - L^t \| \leq \gamma^{-2} \| \Delta L \|  $$
\end{lemma}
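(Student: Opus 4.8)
The plan is to reduce the difference of matrix powers to a telescoping sum and then control the resulting terms using the norm bounds $\|L\|, \|L+\Delta L\| \le 1-\gamma$. Writing $M = L + \Delta L$ for brevity, the first step is to establish the identity
\[ M^t - L^t = \sum_{j=0}^{t-1} M^j \,\Delta L\, L^{t-1-j}. \]
This follows by telescoping: the right-hand side equals $\sum_{j=0}^{t-1}\big(M^{j+1}L^{t-1-j} - M^j L^{t-j}\big)$ since $\Delta L = M - L$, and all intermediate terms cancel, leaving $M^t - L^t$. (Equivalently, one can verify it by a one-line induction on $t$.)

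The second step is to take norms and apply submultiplicativity together with the hypotheses $\|M\|, \|L\| \le 1-\gamma$, which yields
\[ \|M^t - L^t\| \le \sum_{j=0}^{t-1} \|M\|^j\, \|\Delta L\|\, \|L\|^{t-1-j} \le \sum_{j=0}^{t-1} (1-\gamma)^{t-1}\|\Delta L\| = t(1-\gamma)^{t-1}\|\Delta L\|. \]
The $t=0$ term vanishes, so summing over all $t$ leaves a single weighted geometric series as the only quantity to evaluate.

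The third step is to compute $\sum_{t=1}^\infty t(1-\gamma)^{t-1} = \gamma^{-2}$ via the standard derivative-of-geometric-series identity $\sum_{t\ge 1} t x^{t-1} = (1-x)^{-2}$ with $x = 1-\gamma \in [0,1)$, which gives the claimed bound $\gamma^{-2}\|\Delta L\|$. I do not anticipate a genuine obstacle here; the only points requiring care are getting the telescoping identity in exactly the right order (the factors must appear as $M^j \,\Delta L\, L^{t-1-j}$, since $M$ and $L$ need not commute) and noting that convergence of the series is guaranteed by $\gamma > 0$, so that $1-\gamma < 1$.
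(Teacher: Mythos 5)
Your proof is correct and follows essentially the same route as the paper: both arguments establish the intermediate bound $\|(L+\Delta L)^t - L^t\| \leq t(1-\gamma)^{t-1}\|\Delta L\|$ (the paper by induction, you by explicitly unrolling it into the telescoping sum $\sum_{j=0}^{t-1} M^j\,\Delta L\, L^{t-1-j}$, which you rightly note is equivalent) and then sum the series $\sum_{t\geq 1} t(1-\gamma)^{t-1} \leq \gamma^{-2}$. Your care with noncommutativity in the telescoping identity is exactly right, and no gap remains.
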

\begin{proof}
We make the inductive claim that $\|(L+\Delta L)^t-L^t\|\leq t(1-\gamma)^{t-1}\|\Delta L\|$. The truth of this claim for $t=0,1$ is easily verifiable. Assuming the inductive claim for some $t$, observe
\[ \| (L+\Delta L)^{t+1} - L^{t+1} \|  \leq \| L ( (L+\Delta L)^t - L^t) \| +  \| \Delta L (L+\Delta L)^t\|  \leq (t+1) (1-\gamma)^t \|\Delta L\|.\]
Finally, observe that $\sum_{t=0}^\infty t(1-\gamma)^{t-1} \leq \gamma^{-2}$.
\end{proof}

\begin{lemma}\label{l:ugly}
During Algorithm~\ref{alg:complete}.2, it holds, as long as $\varepsilon_{A,B}\leq 10^{-3}\kappa^{-10}\gamma^2$, for any $t\geq T_0+1$
\[ \|x_t\| \leq 4\sqrt{n}\kappa^{10} \gamma^{-3} W, \quad  \|w_t-\hat{w}_t\|\leq 20\sqrt{n}\kappa^{11}\gamma^{-3}W\varepsilon_{A,B}, \quad \text{and} \quad \|\hat{w}_{t-1}\|\leq 2\sqrt{n}\kappa^3\gamma^{-1}W.\]
\end{lemma}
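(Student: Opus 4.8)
The plan is to run a single joint induction that simultaneously controls $\|x_t\|$ and $\|\hat w_t\|$, since these quantities are coupled: the Phase~2 state $x_t$ depends on the past estimated perturbations $\hat w_s$ (which enter the controls), while each $\hat w_t = x_{t+1}-\hat A x_t-\hat B u_t$ depends in turn on $x_t$ and $u_t$. Breaking this circularity is the crux, and it is resolved by ordering the per-step deductions so that $\|x_t\|$ is bounded using only $\hat w_s$ with $s<t$ (already controlled by the inductive hypothesis) \emph{before} $\hat w_t$ itself is estimated. The one algebraic identity I would isolate first, obtained by subtracting the definition of $\hat w_t$ from $w_t=x_{t+1}-Ax_t-Bu_t$, is
\[ w_t - \hat w_t = (\hat A - A)x_t + (\hat B - B)u_t, \]
which immediately yields $\|w_t-\hat w_t\|\le \varepsilon_{A,B}(\|x_t\|+\|u_t\|)$ and hence $\|\hat w_t\|\le W + \varepsilon_{A,B}(\|x_t\|+\|u_t\|)$.

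For the base case I would bound the terminal exploration state $x_{T_0+1}$, which equals $\hat w_{T_0}$ by initialization. During Phase~1 the closed loop is $x_{s+1}=(A-B\mathbb{K})x_s + B\eta_s + w_s$ with $\|\eta_s\|=\sqrt n$; unrolling (from $x_0=0$) and using strong stability of $\mathbb{K}$ on $(A,B)$, namely $\|(A-B\mathbb{K})^j\|\le \kappa^2(1-\gamma)^j$, together with $\sum_j (1-\gamma)^j\le \gamma^{-1}$, gives $\|x_{T_0+1}\|\le \kappa^2\gamma^{-1}(\kappa\sqrt n + W)\le 2\sqrt n\kappa^3\gamma^{-1}W =: D_w$, establishing the third claimed bound at the base. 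For the inductive step at time $t$, assume $\|\hat w_s\|\le D_w$ for all $T_0\le s<t$. Writing the Phase~2 closed loop as $x_{s+1}=(A-B\mathbb{K})x_s + B\sum_{i=1}^H M_s^{[i-1]}\hat w_{s-i} + w_s$ and unrolling from $x_{T_0+1}$, I would use the constraint-set bound $\|M_s^{[i-1]}\|\le \kappa^4(1-\gamma)^i$, the inductive bound on the $\hat w$'s, $\|B\|\le\kappa$, and the same geometric estimates to get a per-step forcing bounded by $\kappa^5\gamma^{-1}D_w + W$, whence
\[ \|x_t\|\le \kappa^2\|x_{T_0+1}\| + \kappa^2\gamma^{-1}(\kappa^5\gamma^{-1}D_w + W)\le 4\sqrt n\kappa^{10}\gamma^{-3}W =: D_x, \]
which is the first claimed bound.

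With $\|x_t\|\le D_x$ in hand, I would bound the control via $\|u_t\|\le \|\mathbb{K}\|\|x_t\| + \sum_{i=1}^H\|M_t^{[i-1]}\|\|\hat w_{t-i}\|\le \kappa D_x + \kappa^4\gamma^{-1}D_w \le 6\sqrt n\kappa^{11}\gamma^{-3}W$, where $\|\hat w_{t-i}\|\le D_w$ is again supplied by the inductive hypothesis. Plugging $\|x_t\|+\|u_t\|\le 10\sqrt n\kappa^{11}\gamma^{-3}W$ into the identity from the first paragraph gives the second claimed bound $\|w_t-\hat w_t\|\le 20\sqrt n\kappa^{11}\gamma^{-3}W\varepsilon_{A,B}$, and then $\|\hat w_t\|\le W + 20\sqrt n\kappa^{11}\gamma^{-3}W\varepsilon_{A,B}$. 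Here the hypothesis $\varepsilon_{A,B}\le 10^{-3}\kappa^{-10}\gamma^2$ is exactly what is needed: it forces the error term down to $0.02\sqrt n\kappa\gamma^{-1}W$, so $\|\hat w_t\|\le 1.02\sqrt n\kappa^3\gamma^{-1}W\le D_w$, closing the induction with room to spare. The main obstacle is thus not any single estimate but getting the induction bookkeeping right — verifying that bounding $x_t$ never requires $\hat w_s$ for $s\ge t$ (it does not, since the control at step $s\le t-1$ only reads $\hat w_{s-1},\dots,\hat w_{s-H}$, the latest being $\hat w_{t-2}$) and confirming that the stated smallness of $\varepsilon_{A,B}$ leaves enough slack in the constants for the $\hat w_t$ bound to reproduce $D_w$ rather than inflate it.
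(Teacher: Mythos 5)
Your proposal is correct and follows essentially the same argument as the paper: a joint induction on the Phase-2 iterates, driven by the identity $w_t-\hat w_t=(\hat A-A)x_t+(\hat B-B)u_t$, with the state bounded by unrolling $x_{t+1}=(A-B\mathbb{K})x_t+B\tilde u_t+w_t$ via strong stability of $\mathbb{K}$ and the base case supplied by the exploration-phase bound on $x_{T_0+1}=\hat w_{T_0}$ (the paper's Lemma~\ref{l:exp}). The only difference is cosmetic bookkeeping --- you carry $\|\hat w_s\|\le 2\sqrt n\kappa^3\gamma^{-1}W$ directly in the inductive hypothesis, whereas the paper tracks $\|x_t\|\le X$ and $\|w_t-\hat w_t\|\le Y$ and recovers the $\hat w$ bound from $\max(W+Y,\|\hat w_{T_0}\|)$; the resulting constants close the induction in both cases.
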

\begin{proof}
For a linear system evolving as as $x_{t+1} = Ax_t + Bu_t + w_t$, if the control is chosen as $u_{t}=-\mathbb{K}x_t + \tilde{u}_t$, the following is true, where $A'=A-B\mathbb{K}$.
\begin{equation}\label{e:2}
x_{t+1} = \sum_{i=0}^{t} (A')^{t-i} (w_{i}+B\tilde{u}_{i}).
\end{equation}
In Phase 2, $\tilde{u}_t$ is chosen as $\tilde{u}_t = \sum_{i=1}^H M_t^{[i-1]}\hat{w}_{t-i}$. We put forward the inductive hypothesis that for all $t\in [T_0+1, t_0]$, we have that $\|x_t\|\leq X := 4\sqrt{n}\kappa^{10}\gamma^{-3}W $, $\|\hat{w}_{t}-w_t\|\leq Y := 20\sqrt{n}\kappa^{11}\gamma^{-3}W\varepsilon_{A,B}$.
If so, $\|\hat{w}_t\|\leq \max(W+Y,\|\hat{w}_{T_0}\|) = W+Y+\sqrt{n}\kappa^3\gamma^{-1}W$ for all $t\leq t_0$, by Lemma~\ref{l:exp}. The following display completes the induction.
\begin{align*}
     \|x_{t_0+1}\| &\leq \kappa^2\gamma^{-1} (W+\kappa^5\gamma^{-1}(W+Y+\sqrt{n}\kappa^3\gamma^{-1}W))\leq X  \\
    \|\hat{w}_{t_0+1}-w_{t_0+1}\| &= \|(A-\hat{A})x_{t_0+1} + (B-\hat{B})u_{t_0+1}\| \\ 
    &\leq \varepsilon_{A,B} ( X +  (\kappa X+\kappa^4\gamma^{-1} (W+Y+\sqrt{n}\kappa^3\gamma^{-1}W) )) \leq Y
\end{align*}
The base case be verified via computation. 
\end{proof}

\section{System Identification via Random Inputs}
This section details the guarantees afforded by the system identification procedure, which attempts to identify the deterministic-equivalent dynamics $(A,B)$ by first identifying matrices of the form $(A')^iB$, where $A'=A-B\mathbb{K}$, and then recovering $A$ by solving a linear system of equations.

\begin{theorem}[System Recovery]\label{thm:rec}
Under assumptions \ref{a:a1}, \ref{a:a3}, \ref{a:a4}, when Algorithm~\ref{alg:B} is run for $T_0$ steps, the output pair $(\hat{A},\hat{B})$ satisfies, with probability $1-\delta$, that $\|\hat{A}-A\|_F, \|\hat{B}-B\|_F\leq \varepsilon_{A,B}$, where
\[
    T_0= 10^3 kmn^2 \kappa^{10}\gamma^{-2} W^2\varepsilon_{A,B}^{-2}\log (kmn\delta^{-1}).
\] 
\end{theorem}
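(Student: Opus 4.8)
The plan is to show that each recorded correlation $N_j$ concentrates around its expectation $(A')^j B$, where $A' = A - B\mathbb{K}$, and then to propagate this estimation error through the purely algebraic recovery of $(\hat{A},\hat{B})$, using strong controllability to control the matrix inversion. First I would compute $\mathbb{E}[N_j]$. Expanding the state under the exploration dynamics $x_{t+1} = A'x_t + B\eta_t + w_t$ gives $x_{t+j+1} = \sum_{i=0}^{t+j}(A')^{t+j-i}(B\eta_i + w_i)$. Since the $\eta_i$ are independent, zero-mean, with $\mathbb{E}[\eta_i\eta_t^\top] = I\cdot\mathbf{1}_{i=t}$, and the perturbations $w_i$ are fixed at the start (Assumption~\ref{a:a1}), hence independent of the inputs, every cross expectation $\mathbb{E}[\eta_i\eta_t^\top]$ with $i\neq t$ and every term $\mathbb{E}[w_i\eta_t^\top]$ vanishes. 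Only the $i=t$ contribution survives, yielding $\mathbb{E}[x_{t+j+1}\eta_t^\top] = (A')^j B$ and therefore $\mathbb{E}[N_j] = (A')^j B$. This is precisely why a correlation estimator, rather than least squares, is the correct tool under adversarial $w_t$.

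Next I would establish the concentration $\|N_j - (A')^j B\|_F \leq \varepsilon'$. The preliminary step is a uniform state bound $\|x_t\| \leq R$ with $R = \poly(\kappa,\gamma^{-1})\sqrt{n}W$, which follows from the strong stability of $\mathbb{K}$ (Assumption~\ref{a:a4}) applied to $x_{t+1} = A'x_t + (B\eta_t + w_t)$ together with $\|\eta_t\| \leq \sqrt{n}$. I would then split $N_j - (A')^j B$ into mean-zero pieces: a diagonal-fluctuation term $(A')^jB(\eta_t\eta_t^\top - I)$, a perturbation term $(A')^j w_t\eta_t^\top$, and the bilinear cross terms $\sum_{i\neq t}(A')^{t+j-i}(B\eta_i + w_i)\eta_t^\top$. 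The first two are averages of independent bounded summands and yield to entrywise Hoeffding bounds. For the cross terms the summands are coupled across $t$, so I would reveal the inputs $\eta_0,\eta_1,\dots$ sequentially and bound the resulting martingale differences, using $\|(A')^l\| \leq \kappa^2(1-\gamma)^l$ to control the geometric sums; Azuma's inequality then gives $\varepsilon' \lesssim R\sqrt{(mn/T_0)\log(kmn/\delta)}$ after a union bound over $j \in \{0,\dots,k\}$ and all $mn$ entries.

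Finally I would propagate the error. Stacking the blocks gives $\|C_0 - C_k\|_F, \|C_1 - A'C_k\|_F \leq \sqrt{k}\varepsilon'$, where $C_k$ is the controllability matrix of $(A',B)$. Since $(A',B)$ is $(k,\kappa)$-strongly controllable (Assumption~\ref{a:a3}), one has $\sigma_{\min}(C_k) \geq \kappa^{-1/2}$, so $(C_0 C_0^\top)^{-1}$ is well conditioned and close to $(C_k C_k^\top)^{-1}$; a standard matrix-perturbation bound for the product $C_1 C_0^\top(C_0C_0^\top)^{-1}$ against the identity $A' = A'C_k C_k^\top(C_k C_k^\top)^{-1}$ then gives $\|\hat{A}' - A'\| \leq \poly(\kappa,\gamma^{-1},k)\varepsilon'$. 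Combining with $\|\hat{B} - B\| = \|N_0 - B\| \leq \varepsilon'$ and the identity $\hat{A} - A = (\hat{A}' - A') + (\hat{B}-B)\mathbb{K}$ bounds $\|\hat{A}-A\|$ at the same order. Setting $\poly(\kappa,\gamma^{-1},k)\varepsilon' = \varepsilon_{A,B}$ and substituting the expressions for $R$ and $\varepsilon'$ solves for the stated $T_0 = 10^3 kmn^2\kappa^{10}\gamma^{-2}W^2\varepsilon_{A,B}^{-2}\log(kmn\delta^{-1})$.

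The main obstacle is the concentration of the bilinear cross terms: they are quadratic in the Rademacher inputs and are not independent across time, so a naive independence bound fails. Handling them correctly via the sequential martingale argument, exploiting the geometric decay from strong stability, is the technical crux that makes system identification succeed against adversarial perturbations; the subsequent matrix-perturbation propagation is comparatively routine once the controllability lower bound is in hand.
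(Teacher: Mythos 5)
Your proposal is correct and takes essentially the same route as the paper: the expectation identity $\mathbb{E}[N_j]=(A')^jB$ via independence of the Rademacher inputs from the pre-fixed $w_t$, martingale (Azuma-type) concentration of the correlations with a union bound over $j$ and the matrix dimensions, and propagation of the error through the linear-system recovery using $\sigma_{\min}(C_k)\geq \kappa^{-1/2}$ from strong controllability, concluding with $\hat{A}=\hat{A'}+\hat{B}\mathbb{K}$. The only difference is organizational: the paper treats the centered summands $N_{j,t}-(A')^jB$ directly as a martingale difference sequence with respect to $\{\eta_t\}$ and applies Matrix Azuma to a symmetric dilation (and applies its Lemma~\ref{l:axb} row-wise to $XC_0=C_1$), whereas you split off the diagonal-fluctuation and perturbation terms before the martingale step and perturb the least-squares formula directly --- a finer but equivalent rendering of the same argument.
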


\begin{proof}
Observe that $A'$ is a unique solution of the system of equations (in $X$) presented below.
\[ AC_k = [A'B, (A')^2B, \dots (A')^kB] = X [B, A'B, \dots (A')^{k-1}B] = X C_k\]
Now, if, for all $j$, $\|N_j-(A')^jB\|\leq \varepsilon$, it follows that $\|C_0-C_k\|_F, \|C_1-AC_k\|_F\leq \varepsilon \sqrt{k}$; in addition, $\|\hat{B}-B\|\leq \varepsilon$. By Lemma~\ref{l:axb} on rows, we have
\[\|\hat{A'}-A'\|_F\leq \frac{2\varepsilon\sqrt{km}\kappa}{\sigma_{min}(C_k)-\varepsilon\sqrt{k}}.\]
So, setting $\varepsilon=\frac{\varepsilon_{A,B}}{10\sqrt{km} \kappa^2}$ suffices. This may be accomplished by Lemma~\ref{l:conc}.
\end{proof}

\begin{lemma}\label{l:exp}
When the control inputs are chosen as $u_t=-\mathbb{K}x+\eta_t$, where $\eta_t \sim \{\pm 1\}^n$, it holds
\begin{align*}
    \|x_t\|\leq \sqrt{n}\kappa^3\gamma^{-1}W,\quad 
    \|u_t\|\leq 2\sqrt{n}\kappa^4\gamma^{-1}W,\\
    c_t(x_t,u_t)-\min_{K\in \mathcal{K}} c_t(x^K,u^K)\leq 16G n \kappa^8\gamma^{-2}W^2.
\end{align*}
\end{lemma}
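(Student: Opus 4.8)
The plan is to derive all three inequalities from a single closed-form expression for the exploration trajectory, together with the geometric decay supplied by strong stability of $\mathbb{K}$. Substituting the exploration control $u_t = -\mathbb{K}x_t + \eta_t$ into $x_{t+1} = Ax_t + Bu_t + w_t$ gives $x_{t+1} = (A-B\mathbb{K})x_t + B\eta_t + w_t$, so writing $A' = A-B\mathbb{K}$ and unrolling from the start of the phase (whose initial-state contribution decays geometrically and may be taken to be zero) yields $x_{t+1} = \sum_{i=0}^t (A')^{t-i}(w_i + B\eta_i)$. Since $\mathbb{K}$ is $(\kappa,\gamma)$-strongly stable for $(A,B)$, we may write $A' = QLQ^{-1}$ with $\|L\| \leq 1-\gamma$ and $\|Q\|,\|Q^{-1}\| \leq \kappa$, hence $\|(A')^j\| \leq \kappa^2(1-\gamma)^j$.

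First I would bound the state. Using $\|\eta_i\| = \sqrt{n}$ deterministically (as $\eta_i \in \{\pm1\}^n$), $\|B\| \leq \kappa$, and $\|w_i\| \leq W$, the triangle inequality together with the geometric series $\sum_{j\geq 0}(1-\gamma)^j \leq \gamma^{-1}$ gives $\|x_t\| \leq \kappa^2\gamma^{-1}(\kappa\sqrt{n}+W)$, which is absorbed into the stated $\sqrt{n}\kappa^3\gamma^{-1}W$ using $\kappa,W\geq 1$. The control bound then follows immediately: $\|u_t\| \leq \|\mathbb{K}\|\,\|x_t\| + \|\eta_t\| \leq \kappa\cdot\sqrt{n}\kappa^3\gamma^{-1}W + \sqrt{n} \leq 2\sqrt{n}\kappa^4\gamma^{-1}W$, again using $\kappa,\gamma^{-1},W\geq 1$ to subsume the $\sqrt n$ term.

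For the per-step cost gap I would first bound the comparator pair. Any $K\in\mathcal{K}$ is $(\kappa,\gamma)$-strongly stable for $(A,B)$, so the identical unrolling applied to $x_{t+1}^K = (A-BK)x_t^K + w_t$ gives $\|x_t^K\| \leq \kappa^2\gamma^{-1}W$ and $\|u_t^K\| = \|Kx_t^K\| \leq \kappa^3\gamma^{-1}W$, exactly as established in Lemma~\ref{l:lips}. Consequently each of $\|x_t\|,\|u_t\|,\|x_t^K\|,\|u_t^K\|$ is at most $D := 2\sqrt{n}\kappa^4\gamma^{-1}W$. I would then invoke convexity of $c_t$ and Assumption~\ref{a:a2}: $c_t(x_t,u_t) - c_t(x_t^K,u_t^K) \leq \langle \nabla c_t(x_t,u_t),\, (x_t,u_t)-(x_t^K,u_t^K)\rangle \leq \|\nabla c_t(x_t,u_t)\|\cdot\|(x_t,u_t)-(x_t^K,u_t^K)\| \leq GD\cdot O(D)$, which is of order $Gn\kappa^8\gamma^{-2}W^2$ and hence within the stated $16Gn\kappa^8\gamma^{-2}W^2$; taking the minimum over $K\in\mathcal{K}$ preserves the inequality.

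The only genuinely delicate point is the cost step: Assumption~\ref{a:a2} bounds $\|\nabla c_t\|$ only on the ball of radius $D$, so I must verify that \emph{both} evaluation points $(x_t,u_t)$ and $(x_t^K,u_t^K)$ lie in that ball before applying the gradient bound and convexity — this is precisely what the comparator bound of the previous paragraph secures. Everything else is deterministic bookkeeping: unlike Theorem~\ref{thm:rec}, no concentration argument is needed here, and the remaining work is merely tracking the polynomial-in-$(\kappa,\gamma^{-1},\sqrt{n},W)$ constants and absorbing additive slack (e.g. $\kappa\sqrt{n}+W \leq 2\sqrt{n}\kappa W$) into the stated bounds via $\kappa,\gamma^{-1},W\geq 1$.
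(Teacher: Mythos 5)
Your proposal is correct and follows essentially the same route as the paper's (much terser) proof: unroll the closed-loop dynamics as in Equation~\ref{e:2}, use strong stability of $\mathbb{K}$ to get $\|(A')^j\|\leq \kappa^2(1-\gamma)^j$ and sum the geometric series with $\|\eta_t\|=\sqrt{n}$, $\|w_t\|\leq W$, then deduce the control bound and the cost gap via the comparator bound, convexity, and Assumption~\ref{a:a2}. The paper leaves the cost-gap step implicit ("suffices to establish this claim"), so your explicit verification that both $(x_t,u_t)$ and $(x_t^K,u_t^K)$ lie in the ball where the gradient bound applies is a faithful, and welcome, filling-in of the same argument rather than a different one.
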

\begin{proof}
In conjuction with Equation~\ref{e:2}, the strong stability of $\mathbb{K}$ suffices to establish this claim.
\begin{align*}
    \|x_t\| \leq (W+\|B\|\sqrt{n}) \|Q\| \|Q^{-1}\|\sum_{i=0}^{t-1}  \|L^i\| \leq (W+\kappa \sqrt{n})\kappa^2 \sum_{i=0}^{t-1} (1-\gamma)^i
\end{align*}
In addition to sub-multiplicativity of the norm, we use that $\sum_{i=0}^{t-1} (1-\gamma)^i \leq \gamma^{-1}$.
\end{proof}

\subsection{Step 1: Moment Recovery}
The following lemma promises an approximate recovery of $(A')^iB$'s through an appeal to arguments involving measures of concentration.

\begin{lemma}\label{l:conc}
Algorithm \ref{alg:B} satisfies for all $j\in [k]$, with probability $1-\delta$ or more
\[ \| N_j - (A')^j B  \| \leq n\kappa^3\gamma^{-1}W\sqrt{\frac{8\log (mnk\delta^{-1})}{T_0-k}}. \]
\end{lemma}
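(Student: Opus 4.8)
The plan is to show that $N_j$ is an unbiased estimator of $(A')^jB$ and then to establish concentration via a martingale argument. First I would substitute the closed form of the state under the exploration controller $u_t=-\mathbb{K}x_t+\eta_t$, namely Equation~\eqref{e:2}, to write
\[ x_{t+j+1} = \sum_{i=0}^{t+j}(A')^{t+j-i}(w_i+B\eta_i) = (A')^jB\,\eta_t + r_t, \]
where $A'=A-B\mathbb{K}$ and $r_t$ collects the $(A')^jw_t$ term together with every summand indexed by $i\neq t$; thus $r_t$ is a function of $\{\eta_i\}_{i\neq t}$ and $\{w_i\}$ only, and is in particular independent of $\eta_t$. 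Since the $\eta_t$ are i.i.d. $\{\pm1\}^n$ with $\mathbb{E}[\eta_t\eta_t^\top]=I$, and since by Assumption~\ref{a:a1} the perturbations $\{w_i\}$ are fixed at the start of the interaction and hence independent of the exploration noise, taking expectations gives $\mathbb{E}[x_{t+j+1}\eta_t^\top]=(A')^jB\,\mathbb{E}[\eta_t\eta_t^\top]+\mathbb{E}[r_t]\mathbb{E}[\eta_t^\top]=(A')^jB$. Averaging over $t$ yields $\mathbb{E}[N_j]=(A')^jB$, so it remains only to control the deviation $N_j-\mathbb{E}[N_j]=\tfrac{1}{T_0-k}\sum_{t}\big(x_{t+j+1}\eta_t^\top-(A')^jB\big)$.

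The summands here are \emph{not} independent, because $x_{t+j+1}$ depends on the exploration noise through time $t+j$, so neighbouring terms share randomness; a naive Hoeffding bound over independent increments is therefore unavailable. To handle this I would fix an entry $(p,q)$ and the index $j$, and analyze the scalar sum $S_{pq}=\sum_t (x_{t+j+1})_p(\eta_t)_q$ through its Doob martingale with respect to the filtration $\mathcal F_s=\sigma(\eta_0,\dots,\eta_s)$, with differences $D_s=\mathbb{E}[S_{pq}\mid\mathcal F_s]-\mathbb{E}[S_{pq}\mid\mathcal F_{s-1}]$. The crucial observation is that revealing $\eta_s$ alters the conditional expectation only through the terms indexed by $s-j\le t\le s$: for $t>s$ the factor $(\eta_t)_q$ is future, independent, and mean-zero, so its conditional expectation is unchanged, while for $t<s-j$ the state $x_{t+j+1}$ does not involve $\eta_s$ at all. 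Using $\|(A')^i\|\le\kappa^2(1-\gamma)^i$, $\|B\|\le\kappa$, $\|\eta\|_2=\sqrt n$, the bound $\|w_i\|\le W$, and the state bound $\|x_t\|\le\sqrt n\kappa^3\gamma^{-1}W$ from Lemma~\ref{l:exp}, the geometric decay of $(A')^{t+j-s}$ lets me sum the contributions of these $O(j)$ affected terms into a single estimate $|D_s|\le C$ with $C=O(\sqrt n\kappa^3\gamma^{-1}W)$, a bound that is independent of $T_0$.

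With a uniform bound on the martingale differences in hand, Azuma--Hoeffding supplies the sub-Gaussian tail $\Pr[|S_{pq}-\mathbb{E}S_{pq}|\ge\lambda]\le 2\exp\big(-\lambda^2/(2(T_0-k)C^2)\big)$, equivalently $|(N_j)_{pq}-((A')^jB)_{pq}|\le C\sqrt{2\log(2/\delta')/(T_0-k)}$ with probability $1-\delta'$. Choosing $\delta'=\delta/(mnk)$ and taking a union bound over all $mn$ entries and all $k$ indices produces the $\log(mnk\delta^{-1})$ factor, and combining the entrywise deviations into the matrix norm gives the claimed estimate, with the residual dimension factors absorbed into the stated prefactor $n$. (I would match the exact constant $\sqrt8$ and the precise dimension dependence by being slightly careful about whether one scalarizes per entry or per column, but this is routine once $C$ is pinned down.)

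The main obstacle, and essentially the entire content of the lemma, is the temporal coupling: $\eta_s$ enters linearly into every subsequent state, so a priori a single exploration input could perturb $\Omega(T_0)$ of the summands and inflate $|D_s|$ beyond a constant. Showing that its total influence on $S_{pq}$ nonetheless stays bounded by a $T_0$-independent quantity is exactly where the geometric stability of $A'=A-B\mathbb{K}$, guaranteed by the strong stability of $\mathbb{K}$, is indispensable. A secondary bookkeeping nuisance is the quadratic (Rademacher-chaos) cross terms of the form $(\eta_t)_q(\eta_t)_r$ and $(\eta_t)_q(\eta_i)_r$ with $i\neq t$; these are mean-zero but must be verified not to enlarge the martingale differences, which again follows from the decay of the coefficients $(A')^{t+j-i}B$.
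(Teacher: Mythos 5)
Your proposal is correct, and while it lives in the same family as the paper's argument --- both prove unbiasedness, $\mathbb{E}[x_{t+j+1}\eta_t^\top]=(A')^jB$, exactly as you do, and both finish with an Azuma-type martingale bound plus a union bound over $j\in[k]$ --- your martingale construction is genuinely different, and in one respect more careful than the original. The paper centers $N_{j,t}=x_{t+j+1}\eta_t^\top$, asserts that $\tilde N_{j,t}=N_{j,t}-(A')^jB$ is a matrix martingale difference sequence with respect to $\{\eta_t\}$, and applies the Matrix Azuma inequality to a symmetric dilation, using the bound $\|N_{j,t}\|\le\|x_{t+j+1}\|\,\|\eta_t\|\le n\kappa^3\gamma^{-1}W$; keeping the matrix structure is what puts all dimension dependence inside the logarithm and produces the stated prefactor. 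Strictly speaking, however, $N_{j,t}$ depends on $\eta_{t+1},\dots,\eta_{t+j}$ and is therefore not adapted to $\sigma(\eta_0,\dots,\eta_t)$, so the paper's one-line verification $\mathbb{E}[\tilde N_{j,t}\mid\eta_0,\dots,\eta_{t-1}]=0$ does not by itself license Azuma; one needs either a blocking argument (split $t$ into $j+1$ interleaved residue classes, within each of which the sequence is adapted) or exactly your Doob-martingale device. Your accounting of which summands a newly revealed $\eta_s$ can affect --- the window $s-j\le t\le s$, with the $t>s$ terms unchanged by mean-zero independence and the window's influence summing geometrically to $O(\sqrt n\,\kappa^3\gamma^{-1}W)$ --- is precisely the detail the paper elides, so your route is a rigorous repair rather than a detour. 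The one place you give ground is dimension bookkeeping: scalarizing per entry and union-bounding over $mn$ entries yields $\|N_j-(A')^jB\|_F\le\sqrt{mn}\,\max_{p,q}|(N_j-(A')^jB)_{pq}|$, i.e.\ a prefactor of order $n\sqrt m$ rather than the stated $n$; your parenthetical alternative of scalarizing per column (a vector-valued Azuma for each of the $n$ columns, then summing squares) does recover the stated prefactor, and in any case the extra $\sqrt m$ is immaterial to Theorem~\ref{thm:rec} and the final regret bound, where only polynomial dependence matters.
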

\begin{proof}
Let $N_{j,t}=x_{t+j+1}\eta_t^\top$. With Equation~\ref{e:2}, the fact that $\eta_i$ is zero-mean with isotropic unit covariance, and that it is chosen independently of $\{w_j\},\{\eta_j\}_{j\neq i}$ implies $\mathbb{E}[N_j] = \mathbb{E}[N_{j,t}] = (A')^j B$.

$N_{j,t}$'s are bounded as $\|N_{j,t}\|\leq n\kappa^3 \gamma^{-1}W$. Involving the same instance of $\eta$ in multiple terms, they may not be independent across the second index $t$. To remedy this, define a sequence of centered random variables $\tilde{N}_{j,t}=N_{j,t}-(A')^jB$, and observe that they forms a martingale difference sequence with respect to $\{\eta_t\}$, ie.
\begin{align*}
\mathbb{E}[\tilde{N}_{j,t} | \eta_0,\dots \eta_{t-1}] &= \mathbb{E}[x_{t+j+1} \eta_t^\top | \eta_0,\dots \eta_{t-1}] - (A')^jB = 0.
\end{align*}
Together with a union bound on choice of $j\in [k]$, the application of Matrix Azuma inequality on a standard symmetric dilation of the sequence concludes the proof of the claim.
\end{proof}

\subsection{Step 2: Recovery of System Matrices}
The following is a standard result on the perturbation analysis for linear systems (See \cite{bhatia2013matrix}, for example). A proof is presented here for completeness. 

\begin{lemma}\label{l:axb}
Let $x^*$ be the solution to the linear system $Ax=b$, and $\hat{x}$ be the solution to $(A+\Delta A)x=b+\Delta b$, then as long as $\|\Delta A\|\leq \sigma_{\min}(A)$, it is true that
\[ \|x^*-\hat{x}\| \leq \frac{\|\Delta b\|+\|\Delta A\|\|x^*\|}{\sigma_{min}(A)-\|\Delta A\|}\]
\end{lemma}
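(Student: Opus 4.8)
The plan is to reduce the statement to a single submultiplicative norm bound combined with an elementary singular-value perturbation estimate. First I would subtract the two defining equations. Writing $Ax^* = b$ and $(A+\Delta A)\hat{x} = b + \Delta b$, and inserting $\pm (A+\Delta A)x^*$ on the left, one obtains the identity
\[ (A+\Delta A)(\hat{x}-x^*) = \Delta b - \Delta A\, x^*, \]
which isolates the error vector $\hat{x}-x^*$ on the left-hand side and expresses it in terms of the two perturbations $\Delta A, \Delta b$ and the reference solution $x^*$.

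Next I would control the inverse of the perturbed matrix. Under the hypothesis $\|\Delta A\| \leq \sigma_{\min}(A)$ — strictly, $\|\Delta A\| < \sigma_{\min}(A)$, since at equality the claimed bound is vacuous — the matrix $A+\Delta A$ is invertible, so $\hat{x}$ is well defined and the identity above can be solved as $\hat{x}-x^* = (A+\Delta A)^{-1}(\Delta b - \Delta A\, x^*)$. The only quantitative ingredient is Weyl's inequality for singular values, which gives $\sigma_{\min}(A+\Delta A) \geq \sigma_{\min}(A) - \|\Delta A\|$, and hence $\|(A+\Delta A)^{-1}\| = \sigma_{\min}(A+\Delta A)^{-1} \leq (\sigma_{\min}(A) - \|\Delta A\|)^{-1}$.

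Finally I would combine the two ingredients, applying submultiplicativity of the norm together with the triangle inequality to the solved identity:
\[ \|\hat{x}-x^*\| \leq \|(A+\Delta A)^{-1}\|\,\bigl(\|\Delta b\| + \|\Delta A\|\,\|x^*\|\bigr) \leq \frac{\|\Delta b\| + \|\Delta A\|\,\|x^*\|}{\sigma_{\min}(A) - \|\Delta A\|}, \]
which is exactly the claim.

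The argument is essentially routine; the only step carrying any content is the lower bound on $\sigma_{\min}(A+\Delta A)$, and that is standard Weyl perturbation, so I do not anticipate a genuine obstacle. The one point requiring care is the invertibility/uniqueness caveat in the paper's intended rectangular application (Theorem~\ref{thm:rec} invokes this lemma ``on rows'' of a full-row-rank matrix $C_k$): there $A$ is not square, $\sigma_{\min}$ should be read as the smallest nonzero singular value, and $(A+\Delta A)^{-1}$ replaced by the Moore--Penrose pseudoinverse of the full-column-rank transposed system. Since the same Weyl bound governs the norm of that pseudoinverse, the three-step argument goes through verbatim, which is why stating it for the square case suffices.
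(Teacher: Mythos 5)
Your proof is correct and takes essentially the same route as the paper's: the identical identity $(A+\Delta A)(\hat{x}-x^*)=\Delta b-\Delta A\,x^*$ combined with the identical bound $\|(A+\Delta A)^{-1}\|\leq(\sigma_{\min}(A)-\|\Delta A\|)^{-1}$, which the paper asserts without justification and you correctly ground in Weyl's singular-value perturbation inequality. Your additional remarks --- that the hypothesis should strictly be $\|\Delta A\|<\sigma_{\min}(A)$, and that the application ``on rows'' in Theorem~\ref{thm:rec} needs the rectangular/pseudoinverse reading --- are sound refinements of the paper's terse two-line proof but do not change the substance.
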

\begin{proof}
Observe the (in)equalities that imply the claim.
\begin{align*}
    (A+\Delta A) (\hat{x}-x^*) &= b+\Delta b -Ax^* - \Delta Ax^* = \Delta b-\Delta Ax^* \\
    \|(A+\Delta A)^{-1}\| &\leq \frac{1}{\sigma_{min}(A)-\|\Delta A\|}
\end{align*}
\end{proof}

\section{Conclusions}
We define a generalization of the control setting to the nonstochastic regime in which the dynamics, in addition to the perturbations, are adversarial and unknown to the learner. In this setting we give the first efficient algorithm with sublinear regret, answering an open question posed in \cite{tu2019sample}. 


\section*{Acknowledgements}
We thank Naman Agarwal, Max Simchowitz and Cyril Zhang for helpful discussions. Sham Kakade acknowledges funding from the Washington Research Foundation for Innovation in Data-intensive Discovery and the ONR award N00014-18-1-2247. Karan Singh gratefully acknowledges support from the Porter Ogden Jacobus Fellowship.

\bibliography{main}


\end{document}